\theoremstyle{plain}
\newtheorem{thm}{\protect\theoremname}
\theoremstyle{plain}
\newtheorem*{thm*}{\protect\theoremname}
 \newcommand{\mat}[1]{\mathbf{#1}}
 \renewcommand{\vec}[1]{\mathbf{#1}}
 \newcommand{\D}{\mathrm{d}}
 \newcommand{\I}{\mathrm{i}}
 \newcommand{\E}{\mathrm{e}}
 \newcommand{\R}{\mathbb{R}}
 \newcommand{\N}{\mathbb{N}}
 \newcommand*\diff{\mathop{}\!\D}
\theoremstyle{plain}
\providecommand{\theoremname}{Theorem}
\title{Frequency Principle: Fourier Analysis Sheds Light on Deep Neural Networks}
\author{
Zhi-Qin John Xu\thanks{Corresponding author, https://ins.sjtu.edu.cn/people/xuzhiqin/}\\
Shanghai Jiao Tong University\\
xuzhiqin@sjtu.edu.cn
\And
Yaoyu Zhang \\
Institute for Advanced Study\\
Yaoyu@ias.edu
\And 
  Tao Luo\\
Purdue University\\
luo196@purdue.edu
\And
Yanyang Xiao \\
Shenzhen Institutes of Advanced Technology\\
xyy82148@gmail.com
  \And 
  Zheng Ma\\
Purdue University\\
ma531@purdue.edu
}
\providecommand{\theoremname}{Theorem}
\begin{document}
\maketitle
\begin{abstract}
We study the training process of Deep Neural Networks (DNNs) from
the Fourier analysis perspective. We demonstrate a very universal Frequency
Principle (F-Principle) --- DNNs often fit target functions from
low to high frequencies --- on high-dimensional benchmark datasets such as MNIST/CIFAR10 and deep neural networks such as VGG16. This F-Principle of 
DNNs is opposite to the behavior of most conventional iterative numerical schemes (e.g., Jacobi method), which exhibit faster convergence for higher frequencies for various scientific computing problems. With a simple theory, we illustrate that this F-Principle results from the regularity of the commonly used activation functions. The F-Principle implies an implicit bias that DNNs tend to fit training data by a low-frequency function. This understanding provides an explanation of good generalization of DNNs on most real datasets and bad generalization of DNNs on parity function or randomized dataset.
\end{abstract}

\section{Introduction\label{sec:Introduction}}

Understanding the training process of Deep Neural Networks (DNNs)
is a fundamental problem in the area of deep learning. We find a
common implicit bias in the gradient-based training process of DNNs, that is,
a Frequency Principle (F-Principle):

\begin{changemargin}{0.5cm}{0.5cm}\emph{DNNs often fit target
functions from low to high frequencies during the training process.}
\end{changemargin}

In another word, at the early stage of training, the low-frequencies
are fitted and as iteration steps of training increase, the high-frequencies
are fitted. For example, when a DNN is trained to fit $y=\sin(x)+\sin(2x)$,
its output would be close to $\sin(x)$ at early stage and as training goes on, its output would be close to $\sin(x)+\sin(2x)$.
F-Principle was observed empirically in synthetic low-dimensional data with MSE loss during DNN training \citep{xu_training_2018,rahaman2018spectral}.
However, in deep learning, empirical phenomena vary from one network structure to another, from one dataset to another and often show significant difference between synthetic data and high-dimensional real data. It is still of great challenges to quantitatively study the universality of empirical observed phenomena, e.g., F-Principle, in high-dimensional real problems due to large computational cost, for instance, the high-dimensional Fourier transform is prohibitive in practice. In addition, it is also unclear whether the F-Principle can guide the usage and provide insight of DNNs in real problems. 

In this work, we first design two methods to show that the F-Principle
exists in the training process of DNNs for different benchmark setups,
e.g., MNIST \citep{lecun1998mnist}, CIFAR10 \citep{krizhevsky2010cifar},
and deep networks, such as VGG16 \citep{simonyan2014very}. The settings
we have considered are i) different DNN architectures, e.g., fully-connected
network and convolutional neural network (CNN); ii) different activation
functions, e.g., tanh and rectified linear unit (ReLU); iii) different
loss functions, e.g., cross entropy, mean squared error (MSE), and
loss energy functional in variation problems. These results demonstrate
the universality of the F-Principle.

To facilitate the designs and applications of DNN-based schemes, we
characterize a stark difference between DNNs and conventional numerical
schemes on various scientific computing problems, where most of
the conventional methods (e.g., Jacobi method) exhibit the opposite
convergence behavior --- faster convergence for higher frequencies.
This difference implicates that DNN can be adopted to accelerate the
convergence of low frequencies for computational problems.

We also show how the power decaying spectrum of commonly used activation
functions contributes to the F-Principle with a theory under an idealized
setting. Note that this mechanism is rigorously demonstrated for
DNNs of general settings in a subsequent work \citep{luo2019theory}.
Finally, we discuss that the F-Principle provides an understanding
of good generalization of DNNs in many real datasets \citep{zhang2016understanding}
and poor generalization in learning the parity function \citep{shalev2017failures,nye2018efficient},
that is, the F-Principle which implies that DNNs prefer low frequencies,
is consistent with the property of low frequencies dominance in many
real datasets, e.g., MNIST/CIFAR10, but is different from the parity
function whose spectrum concentrates on high frequencies. Compared with previous studies, which only study synthetic data with MSE loss, our main contributions are as follows:

1. By designing both the projection and filtering method, we consistently
demonstrate the F-Principle for high-dimensional real datasets of
MNIST/CIFAR10 over various architectures such as VGG16.

2. For the application of solving differential equations, we show (i) conventional numerical schemes learn higher frequencies
faster whereas DNNs learn lower frequencies faster by the F-Principle,
(ii) convergence of low frequencies can be greatly accelerated with
DNN-based schemes.

3. We show a simple theory under an idealized setting for an easy understanding
of the F-Principle.

4. We discuss in detail the impact of the F-Principle to the generalization
of DNNs that DNNs are implicitly biased towards
a low frequency function leading to good and poor generalization
for low and high frequency dominant target functions respectively.

\section{Frequency Principle}

The concept of ``frequency'' is central to the understanding of
F-Principle. In this paper, \emph{the ``frequency'' means }\textbf{\emph{response
frequency}} NOT image (or input) frequency as explained in the following.

Image (or input) frequency (NOT used in the paper): Frequency of $2$-d
function $I:\mathbb{R}^{2}\to\mathbb{R}$ representing the intensity
of an image over pixels at different locations. This frequency corresponds
to the rate of change of intensity \emph{across neighbouring pixels}.
For example, an image of constant intensity possesses only the zero
frequency, i.e., the lowest frequency, while a sharp edge contributes
to high frequencies of the image.

\textbf{Response frequency} (used in the paper): Frequency of a general
Input-Output mapping $f$. For example, consider a simplified classification
problem of partial MNIST data using only the data with label $0$
and $1$, $f(x_{1},x_{2},\cdots,x_{784}):\mathbb{R}^{784}\to\{0,1\}$
mapping $784$-d space of pixel values to $1$-d space, where $x_{i}$
is the intensity of the $i$-th pixel. Denote the mapping's Fourier
transform as $\hat{f}(k_{1},k_{2},\cdots,k_{784})$. The frequency
in the coordinate $k_{i}$ measures the rate of change of $f(x_{1},x_{2},\cdots,x_{784})$
\emph{with respect to $x_{i}$, i.e., the intensity of the $i$-th
pixel}. If $f$ possesses significant high frequencies for large $k_{i}$,
then a small change of $x_{i}$  in the image
might induce a large change of the output (e.g., adversarial example).
For a dataset with multiple classes, we can similarly define frequency
for each output dimension. An illustration of F-Principle using a
function of 1-d input is in Appendix \ref{sec:1dexp}.

\textbf{Frequency Principle}: \emph{DNNs often fit target functions from low
to high (response) frequencies during the training process.} In the following, by using high-dimensional real datasets, we experimentally demonstrate F-Principle at the levels of both individual frequencies (projection
method) and coarse-grained frequencies (filtering method). 

\section{F-Principle in MNIST/CIFAR10 through projection method \label{sec:project}}

Real datasets are very different from synthetic data used in previous
studies. In order to utilize the F-Principle to understand and better use DNNs
in real datasets, it is important to verify whether the F-Principle
also holds in high-dimensional real datasets.

In the following experiments, we examine the F-Principle in a training
dataset of $\{(\vec{x}_{i},\vec{y}_{i})\}_{i=0}^{n-1}$ where
$n$ is the size of dataset. $\vec{x}_{i}\in\R^{d}$
is a vector representing the image and $\vec{y}_{i}\in\{0,1\}^{10}$
is the output (a one-hot vector indicating the label for the dataset
of image classification). $d$ is the dimension of the
input ($d=784$ for MNIST and $d=32\times32\times3$
for CIFAR10). Since the high dimensional discrete Fourier transform (DFT)
has high computational cost, in this section, we only consider one
direction in the Fourier space through a projection method for each
examination.

\subsection{Examination method: Projection}

For a dataset $\{(\vec{x}_{i},\vec{y}_{i})\}_{i=0}^{n-1}$ we consider one entry of 10-d output, denoted by
$y_{i}\in\mathbb{R}$. The high dimensional discrete non-uniform Fourier
transform of $\{(\vec{x}_{i},y_{i})\}_{i=0}^{n-1}$ is $\hat{y}_{\vec{k}}=\frac{1}{n}\sum_{i=0}^{n-1}y_{i}\exp\left(-\I 2\pi\vec{k}\cdot\vec{x}_{i}\right)$.
The number of all possible $\vec{k}$ grows exponentially on dimension
$d$. For illustration, in each examination, we
consider a direction of $\vec{k}$ in the Fourier space, i.e.,
$\vec{k}=k\vec{p}_{1}$, $\vec{p}_{1}$
is a chosen and fixed unit vector, hence $|\vec{k}|=k$. Then
we have $\hat{y}_{k}=\frac{1}{n}\sum_{i=0}^{n-1}y_{i}\exp\left(-\I2\pi(\vec{p}_{1}\cdot\vec{x}_{j})k\right)$,
which is essentially the $1$-d Fourier transform of $\{(x_{\vec{p}_{1},i},y_{i})\}_{i=0}^{n-1}$,
where $x_{\vec{p}_{1},i}=\vec{p}_{1}\cdot\vec{x}_{i}$ is
the projection of $\vec{x}_{i}$ on the direction $\vec{p}_{1}$
\citep{bracewell1986fourier}. For each training dataset, $\vec{p}_{1}$
is chosen as the first principle component of the input space. To
examine the convergence behavior of different frequency components
during the training, we compute the relative difference between the DNN
output and the target function for selected important frequencies
$k$'s at each recording step, that is, $\Delta_{F}(k)=|\hat{h}_{k}-\hat{y}_{k}|/|\hat{y}_{k}|$,
where $\hat{y}_{k}$ and $\hat{h}_{k}$ are $1$-d Fourier transforms
of $\{y_{i}\}_{i=0}^{n-1}$ and the corresponding DNN output$\{h_{i}\}_{i=0}^{n-1}$,
respectively, along $\vec{p}_{1}$.

\subsection{MNIST/CIFAR10}

In the following, we show empirically that the F-Principle is exhibited
in the selected direction during the training process of DNNs when
applied to MNIST/CIFAR10 with cross-entropy loss. The network for
MNIST is a fully-connected tanh DNN ($784$-$400$-$200$-$10$) and for CIFAR10
is two ReLU convolutional layers followed by a fully-connected DNN
($800$-$400$-$400$-$400$-$10$). All experimental details of this paper can be
found in Appendix~\ref{sec:Experimental-settings}. We consider one of the 10-d outputs in each case using non-uniform
Fourier transform, as shown in Fig.~\ref{fig:CMFT}(a) and \ref{fig:CMFT}(c),
low frequencies dominate in both real datasets. During the training,
the evolution of relative errors of certain selected frequencies (marked
by black squares in Fig.~\ref{fig:CMFT}(a) and \ref{fig:CMFT}(c))
is shown in Fig.~\ref{fig:CMFT}(b) and \ref{fig:CMFT}(d). One can
easily observe that DNNs capture low frequencies first and gradually
capture higher frequencies. Clearly, this behavior is consistent with
the F-Principle. For other components of the output vector and other
directions of $\vec{p}$, similar phenomena are also observed. 
\begin{center}
\begin{figure*}[h]
\begin{centering}
\includegraphics[scale=0.6]{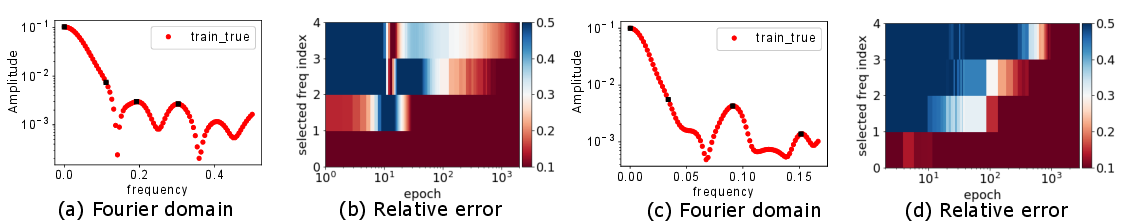} 
\par\end{centering}
\caption{Projection method. (a, b) are for MNIST, (c, d) for CIFAR10. (a, c)
Amplitude $|\hat{y}_{k}|$ vs. frequency. Selected frequencies are
marked by black squares. (b, d) $\Delta_{F}(k)$ vs. training epochs
for the selected frequencies. \label{fig:CMFT} }
\end{figure*}
\par\end{center}

\section{F-Principle in MNIST/CIFAR10 through filtering method \label{sec:Filter}}

The projection method in previous section enables us to visualize
the F-Principle in one direction for each examination at the level
of individual frequency components. However, this demonstration of
the F-Principle is insufficient since it is impossible to verify F-Principle
at all potentially informative directions for high-dimensional data.
To compensate the projection method, in this section, we consider
a coarse-grained filtering method which is able to unravel whether,
in the radially averaged sense, low frequencies converge faster than
high frequencies.

\subsection{Examination method: filtering \label{subsec:Filtering-method}}

We split the frequency domain into two parts, i.e., a low-frequency
part with $|\vec{k}|\leq k_{0}$ and a high-frequency part with
$|\vec{k}|>k_{0}$, where $|\cdot|$ is the length of a vector.
The DNN is trained as usual by the original dataset $\{(\vec{x}_{i},\vec{y}_{i})\}_{i=0}^{n-1}$,
such as MNIST or CIFAR10. The DNN output is denoted as $\vec{h}$.
During the training, we can examine the convergence of relative errors
of low- and high- frequency part, using the two measures below

\begin{equation*}
    e_{\mathrm{low}}=\left(\frac{\sum_{\vec{k}}\mathbbm{1}_{|\vec{k}|\leq k_{0}}|\vec{\hat{y}}(\vec{k})-\vec{\hat{h}}(\vec{k})|^{2}}{\sum_{\vec{k}}\mathbbm{1}_{|\vec{k}|\leq k_{0}}|\vec{\hat{y}}(\vec{k})|^{2}}\right)^{\frac{1}{2}},\quad e_{\mathrm{high}}=\left(\frac{\sum_{\vec{k}}(1-\mathbbm{1}_{|\vec{k}|\leq k_{0}})|\vec{\hat{y}}(\vec{k})-\vec{\hat{h}}(\vec{k})|^{2}}{\sum_{\vec{k}}(1-\mathbbm{1}_{|\vec{k}|\leq k_{0}})|\vec{\hat{y}}(\vec{k})|^{2}}\right)^{\frac{1}{2}},
\end{equation*}
respectively, where $\hat{\cdot}$ indicates Fourier transform, $\mathbbm{1}_{\vec{k}\leq k_{0}}$
is an indicator function, i.e., 
\begin{equation*}
    \mathbbm{1}_{|\vec{k}|\leq k_{0}}=\begin{cases}
    1, & |\vec{k}|\leq k_{0},\\
    0, & |\vec{k}|>k_{0}.
    \end{cases}
\end{equation*}
If we consistently observe $e_{\mathrm{low}}<e_{\mathrm{high}}$ for different
$k_{0}$'s during the training, then in a mean sense, lower frequencies
are first captured by the DNN, i.e., F-Principle.

However, because it is almost impossible to compute above quantities
numerically due to high computational cost of high-dimensional Fourier
transform, we alternatively use the Fourier transform of a Gaussian
function $\hat{G}^{\delta}(\vec{k})$, where $\delta$ is the variance
of the Gaussian function $G$, to approximate $\mathbbm{1}_{|\vec{k}|>k_{0}}$.
This is reasonable due to the following two reasons. First, the Fourier
transform of a Gaussian is still a Gaussian, i.e., $\hat{G}^{\delta}(\vec{k})$
decays exponentially as $|\vec{k}|$ increases, therefore, it can
approximate $\mathbbm{1}_{|\vec{k}|\leq k_{0}}$ by $\hat{G}^{\delta}(\vec{k})$
with a proper $\delta(k_{0})$ (referred to as $\delta$ for simplicity).
Second, the computation of $e_{\mathrm{low}}$ and $e_{\mathrm{high}}$
contains the multiplication of Fourier transforms in the frequency
domain, which is equivalent to the Fourier transform of a convolution
in the spatial domain. We can equivalently perform the examination
in the spatial domain so as to avoid the almost impossible high-dimensional
Fourier transform. The low frequency part can be derived by 
\begin{equation}
    \vec{y}_{i}^{\mathrm{low},\delta}\triangleq(\vec{y}*G^{\delta})_{i},\label{eq:filter-1}
\end{equation}
where $*$ indicates convolution operator, and the high frequency
part can be derived by 
\begin{equation}
    \vec{y}_{i}^{\mathrm{high},\delta}\triangleq\vec{y}_{i}-\vec{y}_{i}^{\mathrm{low},\delta}.\label{eq:yleft-1}
\end{equation}
Then, we can examine 
\begin{equation}
    e_{\mathrm{low}}=\left(\frac{\sum_{i}|\vec{y}_{i}^{\mathrm{low},\delta}-\vec{h}_{i}^{\mathrm{low},\delta}|^{2}}{\sum_{i}|\vec{y}_{i}^{\mathrm{low},\delta}|^{2}}\right)^{\frac{1}{2}},\quad e_{\mathrm{high}}=\left(\frac{\sum_{i}|\vec{y}_{i}^{\mathrm{high},\delta}-\vec{h}_{i}^{\mathrm{high},\delta}|^{2}}{\sum_{i}|\vec{y}_{i}^{\mathrm{high},\delta}|^{2}}\right)^{\frac{1}{2}},\label{eq:ehigh}
\end{equation}
where $\vec{h}^{\mathrm{low},\delta}$ and $\vec{h}^{\mathrm{high},\delta}$
are obtained from the DNN output $\vec{h}$ through the same decomposition.
If $e_{\mathrm{low}}<e_{\mathrm{high}}$ for different $\delta$'s during
the training, F-Principle holds; otherwise, it is falsified. Next,
we introduce the experimental procedure.

Step One: \textbf{Training}. Train the DNN by the \emph{original dataset}
$\{(\vec{x}_{i},\vec{y}_{i})\}_{i=0}^{n-1}$, such as MNIST or
CIFAR10. $\vec{x}_{i}$ is an image vector, $\vec{y}_{i}$ is
a one-hot vector.

Step Two: \textbf{Filtering}. The low frequency part can be derived
by 
\begin{equation}
    \vec{y}_{i}^{{\rm low},\delta}=\frac{1}{C_{i}}\sum_{j=0}^{n-1}\vec{y}_{j}G^{\delta}(\vec{x}_{i}-\vec{x}_{j}),\label{eq:filter}
\end{equation}
where $C_{i}=\sum_{j=0}^{n-1}G^{\delta}(\vec{x}_{i}-\vec{x}_{j})$
is a normalization factor and 
\begin{equation}
    G^{\delta}(\vec{x}_{i}-\vec{x}_{j})=\exp\left(-|\vec{x}_{i}-\vec{x}_{j}|^{2}/(2\delta)\right).
\end{equation}
The high frequency part can be derived by $\vec{y}_{i}^{\mathrm{high},\delta}\triangleq\vec{y}_{i}-\vec{y}_{i}^{\mathrm{low},\delta}$.
We also compute $\vec{h}_{i}^{\mathrm{low},\delta}$ and $\vec{h}_{i}^{\mathrm{high},\delta}$
for each DNN output $\vec{h}_{i}$.

Step Three: \textbf{Examination}. To quantify the convergence of $\vec{h}^{\mathrm{low},\delta}$
and $\vec{h}^{\mathrm{high},\delta}$, we compute the relative error
$e_{\mathrm{low}}$ and $e_{\mathrm{high}}$ through Eq. (\ref{eq:ehigh}).

\subsection{DNNs with various settings}

It is important to verify the F-Principle in commonly used large
networks. With the filtering method, we show the F-Principle in the
DNN training process of real datasets. For MNIST, we use a fully-connected
tanh-DNN (no softmax) with MSE loss; for CIFAR10, we use cross-entropy
loss and two structures, one is small ReLU-CNN network, i.e., two convolutional layers,
followed by a fully-connected multi-layer neural network with a softmax; the other is
VGG16 \citep{simonyan2014very} equipped with a 1024 fully-connected
layer. These three structures are denoted as ``DNN'', ``CNN''
and ``VGG'' in Fig. \ref{fig:Noisefitting-Mnist}, respectively.
All are trained by SGD from \emph{scratch}. More details are in Appendix~\ref{sec:Experimental-settings}

We scan a large range of $\delta$ for both datasets, as an example,
results of each dataset for  several $\delta$'s are shown in Fig. \ref{fig:Noisefitting-Mnist}, respectively. Red
color indicates small relative error. In all cases, the relative error
of the low-frequency part, i.e., $e_{\mathrm{low}}$, decreases (turns
red) much faster than that of the high-frequency part, i.e., $e_{\mathrm{high}}$.
As analyzed above, the low-frequency part converges faster than the
high-frequency part. We also remark that, based on above the results
on cross-entropy loss, the F-Principle is not limited to MSE loss,
which possesses a natural Fourier domain interpretation by the Parseval's
theorem. Note that the above results holds for both SGD and GD. 
\begin{center}
\begin{figure}[h]
\begin{centering}
\subfloat[$\delta=3$, DNN]{\begin{centering}
\includegraphics[scale=0.22]{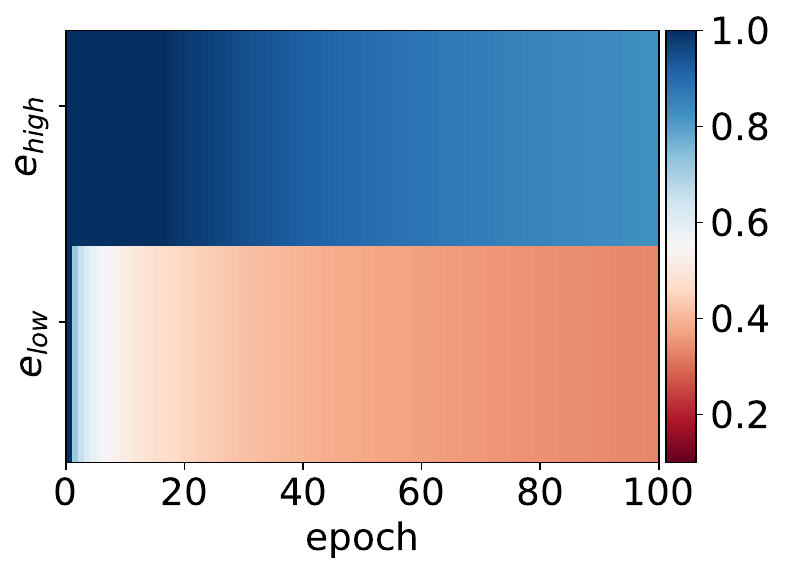} 
\par\end{centering}
}\subfloat[$\delta=3$, CNN]{\begin{centering}
\includegraphics[scale=0.22]{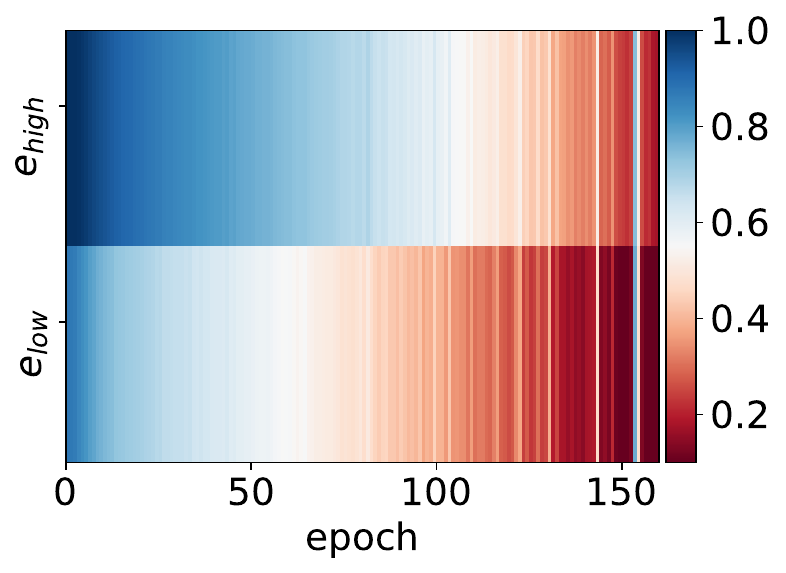} 
\par\end{centering}
}\subfloat[$\delta=7$, VGG]{\begin{centering}
\includegraphics[scale=0.22]{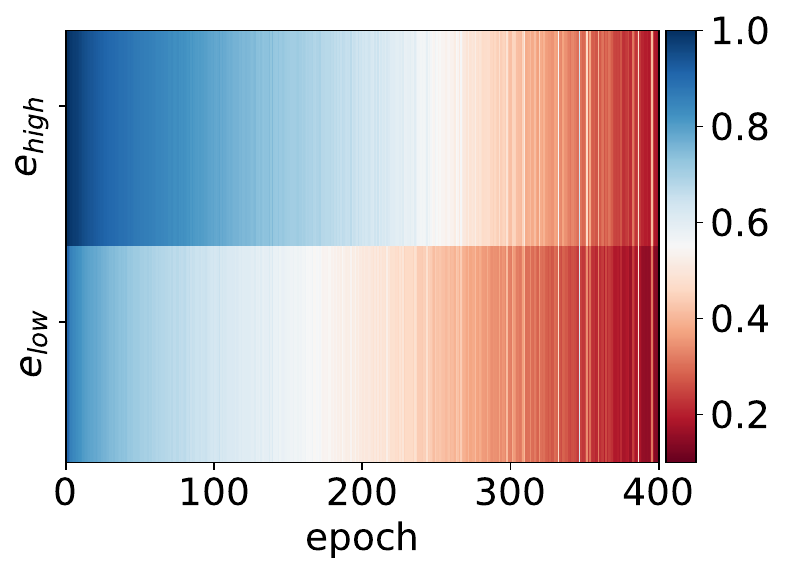} 
\par\end{centering}
}
\par\end{centering}
\begin{centering}
\subfloat[$\delta=7$, DNN]{\begin{centering}
\includegraphics[scale=0.22]{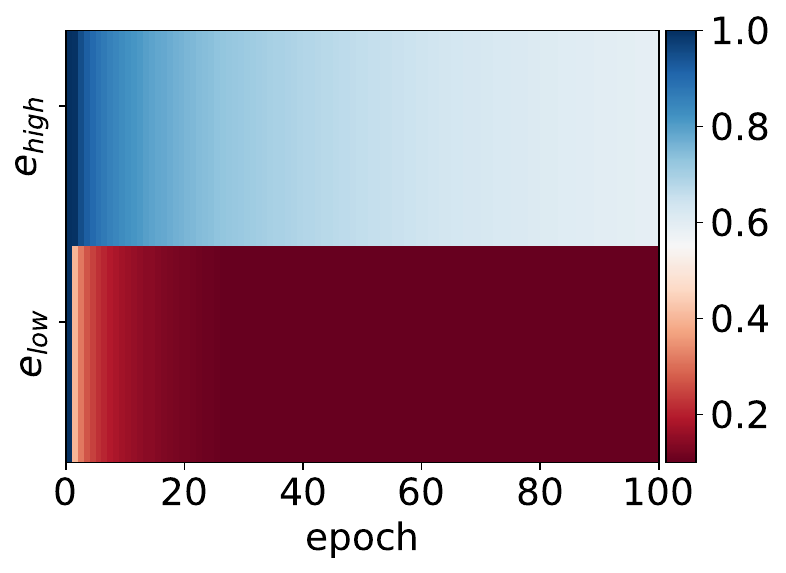} 
\par\end{centering}
}\subfloat[$\delta=7$, CNN]{\begin{centering}
\includegraphics[scale=0.22]{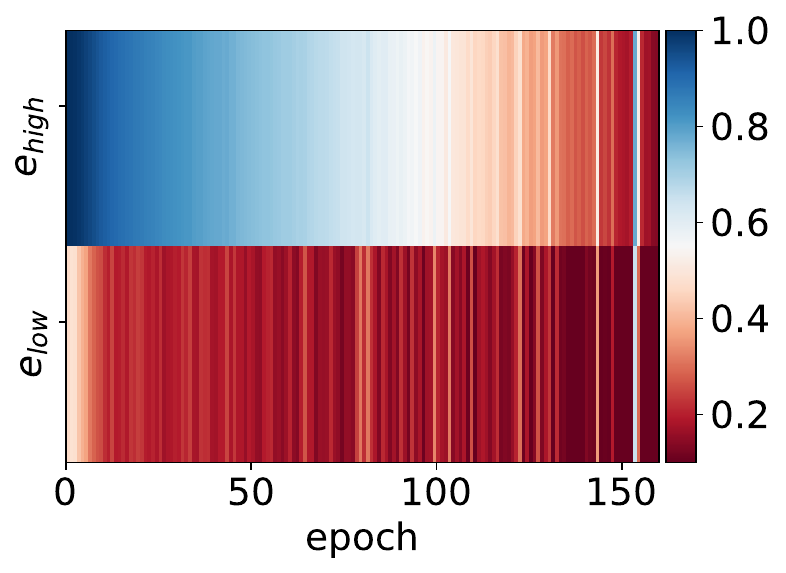} 
\par\end{centering}
}\subfloat[$\delta=10$, VGG]{\begin{centering}
\includegraphics[scale=0.22]{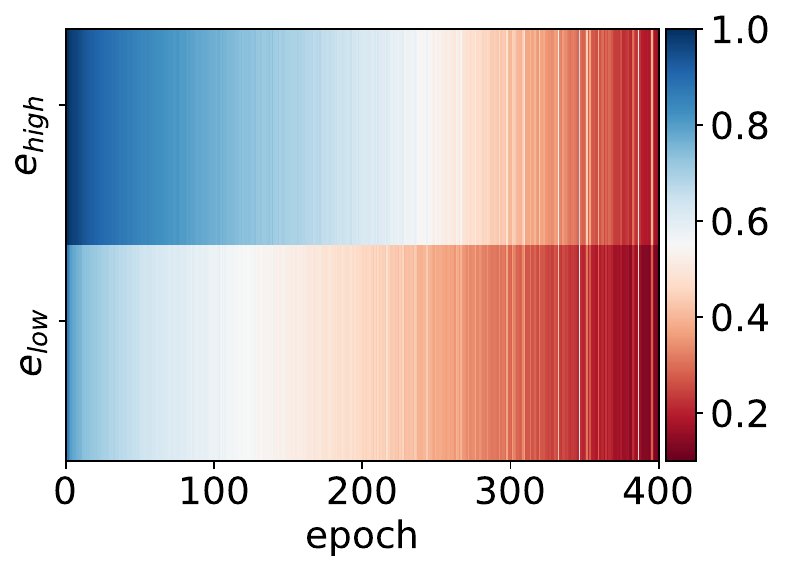} 
\par\end{centering}
}
\par\end{centering}
\caption{F-Principle in real datasets. $e_{\mathrm{low}}$ and $e_{\mathrm{high}}$
indicated by color against training epoch. \label{fig:Noisefitting-Mnist} }
\end{figure}
\par\end{center}

\section{F-Principle in solving differential equation}

Recently, DNN-based approaches have been actively explored for a variety
of scientific computing problems, e.g., solving high-dimensional partial
differential equations \citep{weinan2017deep,khoo2017solving,he2018relu,fan2018multiscale}
and molecular dynamics (MD) simulations \citep{han2017deep}. However,
the behaviors of DNNs applied to these problems are not well-understood.
To facilitate the designs and applications of DNN-based schemes, it
is important to characterize the difference between DNNs and conventional
numerical schemes on various scientific computing problems. In this
section, focusing on solving Poisson's equation, which has broad applications
in mechanical engineering and theoretical physics \citep{evans2010partial},
we highlight a stark difference of a DNN-based solver and the Jacobi
method during the training/iteration, which can be explained by the
F-Principle.

Consider a $1$-d Poisson's equation: 
\begin{align}
    & -\Delta u(x)=g(x),\quad x\in\Omega\triangleq(-1,1),\label{eq:Poisson1-1}\\
    & u(-1)=u(1)=0.\label{eq:Poisson1-2}
\end{align}
We consider the example with $g(x)=\sin(x)+4\sin(4x)-8\sin(8x)+16\sin(24x)$
which has analytic solution $u_{\mathrm{ref}}(x)=g_{0}(x)+c_{1}x+c_{0}$,
where $g_{0}=\sin(x)+\sin(4x)/4-\sin(8x)/8+\sin(24x)/36$, $c_{1}=(g_{0}(-1)-g_{0}(1))/2$
and $c_{0}=-(g_{0}(-1)+g_{0}(1))/2$. $1001$ training samples $\{x_{i}\}_{i=0}^{n}$
are evenly spaced with grid size $\delta x$ in $[0,1]$. Here, we
use the DNN output, $h(x;\theta)$, to fit $u_{\mathrm{ref}}(x)$ (Fig.~\ref{fig:Poisson}(a)).
A DNN-based scheme is proposed by considering the following empirical
loss function \citep{weinan2018deep}, 
\begin{equation}
I_{\mathrm{emp}}=\sum_{i=1}^{n-1}\left(\frac{1}{2}|\nabla_{x}h(x_{i})|^{2}-g(x_{i})h(x_{i})\right)\delta x+\beta\left(h(x_{0})^{2}+h(x_{n})^{2}\right).\label{eq:Energy-1}
\end{equation}
The second term in $I_{\mathrm{emp}}(h)$ is a penalty, with constant $\beta$, arising
from the Dirichlet boundary condition \eqref{eq:Poisson1-2}. After
training, the DNN output well matches the analytical solution $u_{\mathrm{ref}}$.
Focusing on the convergence of three peaks (inset of Fig.~\ref{fig:Poisson}(a))
in the Fourier transform of $u_{\mathrm{ref}}$, as shown in Fig.~\ref{fig:Poisson}(b),
low frequencies converge faster than high frequencies as predicted
by the F-Principle. For comparison, we also use the Jacobi method
to solve problem (\ref{eq:Poisson1-1}). High frequencies converge
faster in the Jacobi method (Details can be found in Appendix \ref{sec:AppPoisson's-equations}),
as shown in Fig.~\ref{fig:Poisson}(c).

As a demonstration, we further propose that DNN can be combined with
conventional numerical schemes to accelerate the convergence of low
frequencies for computational problems. First, we solve the Poisson's
equation in Eq.~(\ref{eq:Poisson1-1}) by DNN with $M$ optimization
steps (or epochs), which needs to be chosen carefully, to get a good
initial guess in the sense that this solution has already learned
the low frequencies (large eigenvalues) part. Then, we use the Jacobi
method with the new initial data for the further iterations. We use
$|h-u_{\mathrm{ref}}|_{\infty}\triangleq\max_{x\in\Omega}|h(x)-u_{\mathrm{ref}}(x)|$
to quantify the learning result. As shown by green stars in Fig.~\ref{fig:Poisson}(d),
$|h-u_{\mathrm{ref}}|_{\infty}$ fluctuates after some running time
using DNN only. Dashed lines indicate the evolution of the Jacobi
method with initial data set to the DNN output at the corresponding
steps. If $M$ is too small (stop too early) (left dashed line), \emph{which
is equivalent to only using Jacobi}, it would take long time to converge
to a small error, because low frequencies converges slowly, yet. If
$M$ is too big (stop too late) (right dashed line), \emph{which is
equivalent to using DNN only}, much time would be wasted for the slow
convergence of high frequencies. A proper choice of $M$ is indicated
by the initial point of orange dashed line, in which low frequencies
are quickly captured by the DNN, followed by fast convergence in high
frequencies of the Jacobi method.

This example illustrates a cautionary tale that, although DNNs has
clear advantage, using DNNs alone may not be the best option because
of its limitation of slow convergence at high frequencies. Taking
advantage of both DNNs and conventional methods to design faster schemes
could be a promising direction in scientific computing problems. 
\begin{center}
\begin{figure*}
\begin{centering}
\includegraphics[scale=0.75]{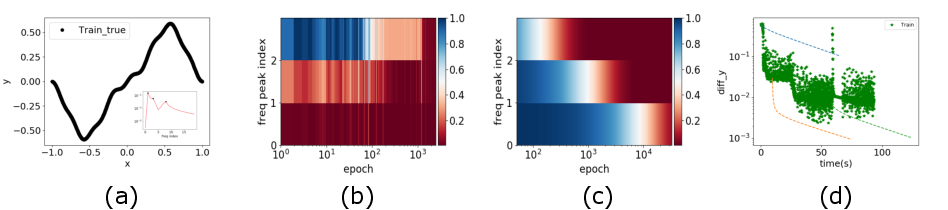} 
\par\end{centering}
\caption{Poisson's equation. (a) $u_{\mathrm{ref}}(x)$. Inset: $|\hat{u}_{\mathrm{ref}}(k)|$
as a function of frequency. Frequencies peaks are marked with black
dots. (b,c) $\Delta_{F}(k)$ computed on the inputs of training data
at different epochs for the selected frequencies for DNN (b) and Jacobi
(c). (d) $|h-u_{\mathrm{ref}}|_{\infty}$ at different running time.
Green stars indicate $|h-u_{\mathrm{ref}}|_{\infty}$ using DNN alone.
The dashed lines indicate $|h-u_{\mathrm{ref}}|_{\infty}$ for the Jacobi
method with different colors indicating initialization by different
timing of DNN training.\label{fig:Poisson} }
\end{figure*}
\par\end{center}

\section{A preliminary theoretical understanding \emph{\label{sec:Theoretical-framework} }}

A subsequent theoretical work \citep{luo2019theory} provides a rigorous
mathematical study of the F-Principle at different frequencies for
general DNNs (e.g., multiple hidden layers, different activation functions,
high-dimensional inputs). The key insight is that the regularity of
DNN converts into the decay rate of a loss function in the frequency
domain. For an intuitive understanding of this key insight, we present
a simplified theory, which connects the smoothness of the activation function
with different gradient priorities in frequency domain. Although this
naive theory is in an ideal setting, it is much easier for understanding.

The activation function we consider is $\sigma(x)=\tanh(x)$, which
is smooth in spatial domain and its derivative decays exponentially with respect to frequency
in the Fourier domain. For a DNN of one hidden layer with $m$ nodes,
1-d input $x$ and 1-d output: $h(x)=\sum_{j=1}^{m}a_{j}\sigma(w_{j}x+b_{j}),\quad a_{j},w_{j},b_{j}\in{\rm \mathbb{R}}.$
We also use the notation $\theta=\{\theta_{lj}\}$ with $\theta_{1j}=a_{j}$,
$\theta_{2j}=w_{j}$, and $\theta_{3j}=b_{j}$, $j=1,\cdots,m$. The
loss at frequency $k$ is $L(k)=\frac{1}{2}\left|\hat{h}(k)-\hat{f}(k)\right|^{2}$,
$\hat{\cdot}$ is the Fourier transform, $f$ is the target function.
The total loss function is defined as: $L=\int_{-\infty}^{+\infty}L(k)\diff{k}$.
Note that according to the Parseval's theorem, this loss function
in the Fourier domain is equal to the commonly used MSE loss. We have
the following theorems (The proofs are at Appendix \ref{sec:Proof-of-theorem1}.).
Define $W=(w_{1},w_{2},\cdots,w_{m})^{T}\in\mathbb{R}^{m}.$ 
\begin{thm}
    \label{thm:Priority} Considering a DNN of one hidden layer with activation
    function $\sigma(x)=\tanh(x)$, for any frequencies $k_{1}$ and $k_{2}$
    such that $|\hat{f}(k_{1})|>0$, $|\hat{f}(k_{2})|>0$, and $|k_{2}|>|k_{1}|>0$,
    there exist positive constants $c$ and $C$ such that for sufficiently
    small $\delta$, we have 
    \begin{align*}
        \frac{\mu\left(\left\{ W:\left|\frac{\partial L(k_{1})}{\partial\theta_{lj}}\right|>\left|\frac{\partial L(k_{2})}{\partial\theta_{lj}}\right|\quad\text{for all}\quad l,j\right\} \cap B_{\delta}\right)}{\mu(B_{\delta})}\geq1-C\exp(-c/\delta),
    \end{align*}
    where $B_{\delta}\subset\mathbb{R}^{m}$ is a ball with radius $\delta$
    centered at the origin and $\mu(\cdot)$ is the Lebesgue measure. 
\end{thm}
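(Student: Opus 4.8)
The plan is to reduce the comparison of the two gradient magnitudes to a comparison of the exponential decay rates of the activation's spectrum, and then to control an oscillatory phase factor on the small ball $B_\delta$. First I would compute the Fourier transform of a single neuron. With the convention $\hat g(k)=\int g(x)\E^{-\I 2\pi kx}\diff{x}$, the substitution $u=w_jx+b_j$ gives
\begin{equation*}
\widehat{\sigma(w_j\cdot+b_j)}(k)=\frac{\E^{\I 2\pi kb_j/w_j}}{w_j}\,\hat\sigma\!\left(\frac{k}{w_j}\right),\qquad \hat\sigma(\xi)=\frac{-\I\pi}{\sinh(\pi^2\xi)},
\end{equation*}
the last identity following from $\sigma'=\mathrm{sech}^2$ and the classical integral $\widehat{\mathrm{sech}^2}(k)=2\pi^2k/\sinh(\pi^2k)$. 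The crucial feature is that $\abs{\hat\sigma(k/w_j)}\asymp\E^{-\pi^2\abs{k}/\abs{w_j}}$, since $\abs{\sinh t}\asymp\tfrac12\E^{\abs{t}}$ for large $\abs{t}$, applied with $t=\pi^2k/w_j$. Writing $\hat h(k)=\sum_j a_jF_j(k)$ with $F_j(k)=w_j^{-1}\E^{\I 2\pi kb_j/w_j}\hat\sigma(k/w_j)$, differentiation gives $\partial_{a_j}\hat h=F_j$, $\partial_{b_j}\hat h=a_j\partial_{b_j}F_j$, $\partial_{w_j}\hat h=a_j\partial_{w_j}F_j$; in every case the result factors as an oscillatory phase $\E^{\I 2\pi kb_j/w_j}$ times a factor $P_{lj}(k;w_j)$ polynomial in $1/\abs{w_j}$ times $\hat\sigma$ or $\hat\sigma'$ at $k/w_j$, so each derivative has modulus $\asymp P_{lj}(k;w_j)\,\E^{-\pi^2\abs{k}/\abs{w_j}}$.

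Next I would exploit the smallness of the ball. For $W\in B_\delta$ every $\abs{w_j}\le\delta$ forces $\abs{\hat h(k)}\le C_a\sup_{0<s\le\delta}s^{-1}\E^{-\pi^2\abs{k}/s}\to0$; hence for $\delta$ small enough $\abs{\hat h(k)-\hat f(k)}\in[\tfrac12\abs{\hat f(k)},2\abs{\hat f(k)}]$ and $\arg\big((\hat h(k)-\hat f(k))^{\ast}\big)=-\arg\hat f(k)-\pi+o(1)$, using the hypotheses $\abs{\hat f(k_1)},\abs{\hat f(k_2)}>0$. Writing $\partial_{\theta_{lj}}L(k)=\mathrm{Re}\big[(\hat h(k)-\hat f(k))^{\ast}\,\partial_{\theta_{lj}}\hat h(k)\big]$ then yields the two-sided estimate
\begin{equation*}
\Abs{\frac{\partial L(k)}{\partial\theta_{lj}}}\le 2\abs{\hat f(k)}P_{lj}(k;w_j)\E^{-\pi^2\abs{k}/\abs{w_j}},\qquad \Abs{\frac{\partial L(k)}{\partial\theta_{lj}}}\ge\tfrac12\abs{\hat f(k)}P_{lj}(k;w_j)\abs{\cos\Theta_{lj}(k;w_j)}\E^{-\pi^2\abs{k}/\abs{w_j}},
\end{equation*}
where $\Theta_{lj}(k;w_j)=2\pi kb_j/w_j+\mathrm{const}_l-\arg\hat f(k)+o(1)$ is the total phase and the upper bound uses $\abs{\hat\sigma(\xi)}\le 2\pi\E^{-\pi^2\abs{\xi}}$ type estimates for its modulus.

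Dividing the upper bound at $k_2$ by the lower bound at $k_1$, using $\abs{k_2}>\abs{k_1}$ and that the ratios $P_{lj}(k_2)/P_{lj}(k_1)$ are bounded uniformly on $B_\delta$, I obtain with $\rho=\abs{k_2}-\abs{k_1}>0$
\begin{equation*}
\frac{\abs{\partial L(k_2)/\partial\theta_{lj}}}{\abs{\partial L(k_1)/\partial\theta_{lj}}}\le\frac{C\abs{\hat f(k_2)}}{\abs{\hat f(k_1)}\,\abs{\cos\Theta_{lj}(k_1;w_j)}}\,\E^{-\pi^2\rho/\abs{w_j}}.
\end{equation*}
Thus the desired inequality holds at $\theta_{lj}$ as soon as $\abs{\cos\Theta_{lj}(k_1;w_j)}>C\tfrac{\abs{\hat f(k_2)}}{\abs{\hat f(k_1)}}\E^{-\pi^2\rho/\abs{w_j}}$, so the bad set lies in the union over the $3m$ indices of $E_{lj}=\{W\in B_\delta:\abs{\cos\Theta_{lj}(k_1;w_j)}\le C'\E^{-\pi^2\rho/\abs{w_j}}\}$. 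The heart of the argument is bounding $\mu(E_{lj})$: each $E_{lj}$ depends on $W$ only through $w_j$, so by Fubini it suffices to estimate the length of $\{w_j\in(-\delta,\delta):\abs{\cos(2\pi k_1b_j/w_j+\cdots)}\le C'\E^{-\pi^2\rho/\abs{w_j}}\}$. Substituting $u=1/w_j$ linearizes the phase to $2\pi k_1b_ju+\cdots$, whose $\cos$ has equally spaced zeros; on each period the sublevel set $\{\abs{\cos}\le\eta\}$ has length $\asymp\eta$, here with $\eta=C'\E^{-\pi^2\rho u}$. Converting back via $\diff{w_j}=-u^{-2}\diff{u}$ and summing the periodic contributions gives a one-dimensional bad measure $\lesssim\int_{1/\delta}^{\infty}\E^{-\pi^2\rho u}\diff{u}\asymp\E^{-\pi^2\rho/\delta}$. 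A union bound over the $3m$ indices, together with $\mu(B_\delta)\asymp\delta^m$ (the transverse slice contributing only a factor $\delta^{-1}$, absorbed into a slightly smaller $c$), yields the claimed bound $1-C\exp(-c/\delta)$ with $c$ of order $\pi^2(\abs{k_2}-\abs{k_1})$.

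The hard part will be the final measure estimate and the items feeding into it: (i) making the $o(1)$ phase error from $\hat h-\hat f\approx-\hat f$ uniform over $B_\delta$ so it does not corrupt the sublevel-set length; (ii) checking, for the $b_j$- and $w_j$-derivatives, that the leading term does not cancel, so that a single oscillatory $\cos\Theta_{lj}$ factor genuinely governs the lower bound; and (iii) the bookkeeping in $u=1/w_j$ that produces the exponential-in-$1/\delta$ bound rather than merely a polynomial one. The degenerate cases $b_j=0$ or $\hat f(k_i)$ purely real, where the phase fails to oscillate in $w_j$, form a lower-dimensional set and can be handled separately or excluded in this idealized setting.
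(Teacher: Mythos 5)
Your proposal follows essentially the same route as the paper's proof: the exact tanh spectrum with its exponential decay $\E^{-c|k|/|w_j|}$, the small-ball limit $\hat{h}(k)\to 0$ forcing $D(k)\to-\hat{f}(k)$ so that amplitude and phase stabilize, reduction of the bad event to an oscillatory factor (your $\cos\Theta_{lj}$, the paper's $\sin(b_jk_1/w_j-\phi(k_1))$) being exponentially small in $1/|w_j|$, and a one-dimensional measure estimate over the periodic near-zeros of that factor in $w_j$, finished by slicing/Fubini and a union bound over $l,j$. The paper's interval-counting over the $q\pi$-spaced zeros in the variable $b_jk_1/w_j$ is precisely your substitution $u=1/w_j$ in discrete form, and, like your sketch, the paper only writes out the $a_j$-derivative case in detail, so your plan is sound and consistent with its argument.
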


Theorem~\ref{thm:Priority} indicates that for any two non-converged
frequencies, with small weights, the lower-frequency gradient exponentially
dominates over the higher-frequency ones. Due to the Parseval's theorem, the MSE loss in the spatial domain is equivalent to the L2 loss in the Fourier domain. To intuitively understand the higher decay rate of a lower-frequency loss function, we consider the training in the Fourier domain with loss function of only two non-zero frequencies.
\begin{thm}
    \label{thm:Priority-2} Considering a DNN of one hidden layer with
    activation function $\sigma(x)=\tanh(x)$. Suppose the target function
    has only two non-zero frequencies $k_{1}$ and $k_{2}$, that is,
    $|\hat{f}(k_{1})|>0$, $|\hat{f}(k_{2})|>0$, $|k_{2}|>|k_{1}|>0$,
    and $|\hat{f}(k)|=0$ for $k\neq k_{1},k_{2}$.  Consider the  loss function of $L=L(k_{1})+L(k_{2})$ with gradient descent training. Denote 
    \begin{equation*}
        \mathcal{S}=\left\{ \frac{\partial L(k_{1})}{\partial t}\leq0,\frac{\partial L(k_{1})}{\partial t}\leq\frac{\partial L(k_{2})}{\partial t}\right\},
    \end{equation*}
    that is, $L(k_{1})$ decreases faster than $L(k_{2})$. There exist
    positive constants $c$ and $C$ such that for sufficiently small
    $\delta$, we have 
    \begin{equation*}
    \frac{\mu\left(\left\{ W:\mathcal{S}\quad{\rm holds}\right\} \cap B_{\delta}\right)}{\mu(B_{\delta})}\geq1-C\exp(-c/\delta),
    \end{equation*}
    where $B_{\delta}\subset\mathbb{R}^{m}$ is a ball with radius $\delta$
    centered at the origin and $\mu(\cdot)$ is the Lebesgue measure. 
\end{thm}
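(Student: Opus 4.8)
The plan is to obtain Theorem~\ref{thm:Priority-2} as an essentially immediate corollary of the componentwise gradient domination supplied by Theorem~\ref{thm:Priority}. First I would write out the gradient-flow dynamics induced by the total loss $L=L(k_1)+L(k_2)$: each parameter evolves as $\dot\theta_{lj}=-\partial L/\partial\theta_{lj}=-\partial L(k_1)/\partial\theta_{lj}-\partial L(k_2)/\partial\theta_{lj}$, so by the chain rule
\begin{align*}
\frac{\partial L(k_i)}{\partial t}=\sum_{l,j}\frac{\partial L(k_i)}{\partial\theta_{lj}}\,\dot\theta_{lj}=-\nabla_\theta L(k_i)\cdot\bigl(\nabla_\theta L(k_1)+\nabla_\theta L(k_2)\bigr),
\end{align*}
where $\nabla_\theta L(k_i)$ is the vector with entries $\partial L(k_i)/\partial\theta_{lj}$. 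Abbreviating $a=\nabla_\theta L(k_1)$ and $b=\nabla_\theta L(k_2)$, this yields the compact forms $\partial_t L(k_1)=-\norm{a}^2-a\cdot b$ and $\partial_t L(k_2)=-\norm{b}^2-a\cdot b$.

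Next I would reduce both inequalities defining $\mathcal{S}$ to a single comparison of gradient norms. The second inequality $\partial_t L(k_1)\leq\partial_t L(k_2)$, after the common term $-a\cdot b$ cancels, is exactly equivalent to $\norm{a}^2\geq\norm{b}^2$, i.e. $\norm{a}\geq\norm{b}$. For the first inequality $\partial_t L(k_1)\leq0$, equivalently $\norm{a}^2+a\cdot b\geq0$, I would invoke Cauchy--Schwarz in the form $a\cdot b\geq-\norm{a}\,\norm{b}$, giving
\begin{align*}
\norm{a}^2+a\cdot b\geq\norm{a}^2-\norm{a}\,\norm{b}=\norm{a}\bigl(\norm{a}-\norm{b}\bigr)\geq0
\end{align*}
whenever $\norm{a}\geq\norm{b}$. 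Hence the single condition $\norm{\nabla_\theta L(k_1)}\geq\norm{\nabla_\theta L(k_2)}$ already forces $W\in\{W:\mathcal{S}\ \text{holds}\}$.

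It then remains to lower-bound the measure of $\{\norm{a}\geq\norm{b}\}$, and here I would appeal to Theorem~\ref{thm:Priority} verbatim. Denote by $\mathcal{E}$ the event of Theorem~\ref{thm:Priority}, on which $|\partial L(k_1)/\partial\theta_{lj}|>|\partial L(k_2)/\partial\theta_{lj}|$ for \emph{every} pair $(l,j)$. Since $|a_{lj}|>|b_{lj}|\geq0$ forces $a_{lj}^2>b_{lj}^2$ termwise, summing over all $l,j$ gives $\norm{a}^2>\norm{b}^2$, hence $\norm{a}>\norm{b}$ on $\mathcal{E}$. Thus $\mathcal{E}\subseteq\{W:\mathcal{S}\ \text{holds}\}$, and monotonicity of Lebesgue measure transfers the bound directly:
\begin{align*}
\frac{\mu\left(\{W:\mathcal{S}\ \text{holds}\}\cap B_\delta\right)}{\mu(B_\delta)}\geq\frac{\mu\left(\mathcal{E}\cap B_\delta\right)}{\mu(B_\delta)}\geq1-C\exp(-c/\delta).
\end{align*}

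I expect no genuinely new obstacle in this argument. All of the analytic content---the Fourier-domain representation of $\hat h$ assembled from $\tanh$, the exponential decay of the activation spectrum, and the small-ball measure estimate---is already encapsulated in Theorem~\ref{thm:Priority}. The only fresh ingredients are the gradient-flow bookkeeping of the first paragraph and the elementary Cauchy--Schwarz step, so the one point to check with care is that the \emph{strict} componentwise domination survives summation into the norm comparison, which it does precisely because $|a_{lj}|>|b_{lj}|\geq0$ implies $a_{lj}^2>b_{lj}^2$ for each coordinate.
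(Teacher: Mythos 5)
Your proposal is correct and takes essentially the same route as the paper's proof: both compute the gradient-flow derivatives of $L(k_1)$ and $L(k_2)$, observe that the cross term $-\nabla_\theta L(k_1)\cdot\nabla_\theta L(k_2)$ cancels in the comparison so that $\mathcal{S}$ reduces to the componentwise gradient-domination event of Theorem~\ref{thm:Priority}, and then transfer that theorem's measure estimate verbatim. The only cosmetic difference is that you deduce $\partial L(k_1)/\partial t\leq 0$ via Cauchy--Schwarz, whereas the paper obtains it by combining $\partial L/\partial t=-\sum_{l,j}\bigl(\partial L(k_1)/\partial\theta_{lj}+\partial L(k_2)/\partial\theta_{lj}\bigr)^2\leq 0$ with the already-established inequality $\partial L(k_1)/\partial t\leq\partial L(k_2)/\partial t$.
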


\section{Discussions \label{sec:Discussions}}

Next, we discuss DNN's generalization ability from the view point
of Fourier analysis.

DNNs often generalize well in real problems \citep{zhang2016understanding}
but badly in fitting the parity function \citep{shalev2017failures,nye2018efficient}.
Understanding the differences between above two types of problems,
i.e., good and bad generalization performance of DNN, is critical.
Next, we show a qualitative difference between these two types of
problems through \emph{Fourier analysis} and use the \emph{F-Principle}
to provide insight into how different characteristics in Fourier domain
result in different generalization performances of DNNs.

For MNIST/CIFAR10, we examined $\hat{y}_{\mathrm{total},\vec{k}}=\frac{1}{n_{\mathrm{total}}}\sum_{i=0}^{n_{\mathrm{total}}-1}y_{i}\exp\left(-\mathrm{i}2\pi\vec{k}\cdot\vec{x}_{i}\right)$,
where $\{(\vec{x}_{i},y_{i})\}_{i=0}^{n_{\mathrm{total}}-1}$ consists
of both the training and test datasets with certain selected output
component, at different directions of $\vec{k}$ in the Fourier
space. We find that $\hat{y}_{\mathrm{total},\vec{k}}$ concentrates
on the low frequencies along those examined directions. For illustration,
$\hat{y}_{\mathrm{total},\vec{k}}$'s along the first principle component
are shown by green lines in Fig.~\ref{fig:parity}(a, b) for MNIST/CIFAR10,
respectively. When only the training dataset is used, $\hat{y}_{\mathrm{train},\vec{k}}$
well overlaps with $\hat{y}_{\mathrm{total},\vec{k}}$ at the dominant
low frequencies.

For the parity function $f(\vec{x})=\prod_{j=1}^d x_{j}$ defined on
$\Omega=\{-1,1\}^{d}$, its Fourier transform is $\hat{f}(\vec{k})=\frac{1}{2^{d}}\sum_{x\in\Omega}\prod_{j=1}^d x_{j}\E^{-\mathrm{i}2\pi\vec{k}\cdot\vec{x}}=(-\I)^{d}\prod_{j=1}^d\sin2\pi k_{j}$.
Clearly, for $\vec{k}\in[-\frac{1}{4},\frac{1}{4}]^{d}$,
the power of the parity function concentrates at $\vec{k}\in\{-\frac{1}{4},\frac{1}{4}\}^{d}$
and vanishes as $\vec{k}\to\vec{0}$, as illustrated in Fig.~\ref{fig:parity}(c)
for the direction of $\vec{1}_{d}$. Given a randomly sampled
training dataset $S\subset\Omega$ with $s$ points, the nonuniform
Fourier transform on $S$ is computed as $\hat{f}_{S}(\vec{k})=\frac{1}{s}\sum_{x\in S}\prod_{j=1}^d x_{j}\E^{-\I 2\pi\vec{k}\cdot\vec{x}}$.
As shown in Fig.~\ref{fig:parity}(c), $\hat{f}(\vec{k})$ and
$\hat{f}_{S}(\vec{k})$ significantly differ at low frequencies.

By experiments, the generalization ability of DNNs can be well reflected
by the Fourier analysis. For the MNIST/CIFAR10, we observed the Fourier
transform of the output of a well-trained DNN on $\{\vec{x}_{i}\}_{i=0}^{n_{\mathrm{total}}-1}$
faithfully recovers the dominant low frequencies, as illustrated in
Fig.~\ref{fig:parity}(a) and \ref{fig:parity}(b), respectively, 
indicating a good generalization performance as observed in experiments.
However, for the parity function, we observed that the Fourier transform
of the output of a well-trained DNN on $\{\vec{x}_{i}\}_{i\in S}$
significantly deviates from $\hat{f}(\vec{k})$ at almost all frequencies,
as illustrated in Fig.~\ref{fig:parity}(c), indicating a bad generalization
performance as observed in experiments.

The F-Principle implicates that among all the functions that can
fit the training data, a DNN is implicitly biased during the training
towards a function with more power at low frequencies. If the target
function has significant high-frequency components, insufficient training
samples will lead to artificial low frequencies in training dataset,
such as the parity function as shown in Fig.~\ref{fig:parity}(c),
which is the well-known \emph{aliasing} effect. Based on the F-Principle,
as demonstrated in Fig.~\ref{fig:parity}(c), these artificial low
frequency components will be first captured to explain the training
samples, whereas the high frequency components will be compromised
by DNN. For MNIST/CIFAR10, since the power of high frequencies is
much smaller than that of low frequencies, artificial low frequencies
caused by aliasing can be neglected. To conclude, the distribution
of power in Fourier domain of above two types of problems exhibits
significant differences, which result in different generalization
performances of DNNs according to the F-Principle. 
\begin{center}
\begin{figure*}
\begin{centering}
\includegraphics[scale=0.65]{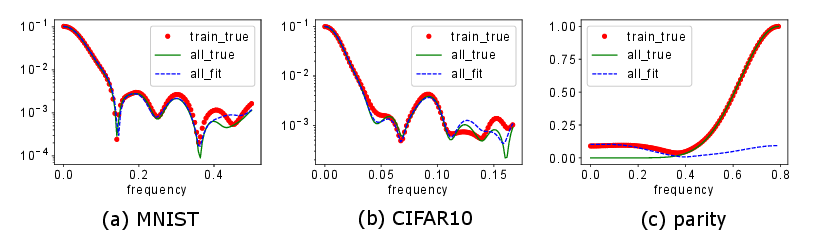} 
\par\end{centering}
\caption{Fourier analysis for different generalization ability. The plot is
the amplitude of the Fourier coefficient against frequency $k$. The
red dots are for the training dataset, the green line is for the whole
dataset, and the blue dashed line is for an output of well-trained
DNN on the input of the whole dataset. For (c), $d=10$.
The training data is $200$ randomly selected points. \label{fig:parity}}
\end{figure*}
\par\end{center}

\section{Related work}

As widely observed in experiments, DNNs with gradient-based training
show different generalization abilities for different problems. On
the one hand, there are different approaches attempting to explain
why the training process often leads to a DNN of good generalization
ability even when the number of parameters is much larger than the
number of training data \citep{zhang2016understanding}. For example,
generalization error is related to various complexity measures \citep{bartlett1999almost,bartlett2002rademacher,bartlett2017spectrally,bartlett2017nearly,neyshabur2017exploring,golowich2017size,dziugaite2017computing,neyshabur2018towards,ma2018priori},
local properties (sharpness/flatness) of loss functions at minima
\citep{hochreiter1995simplifying,keskar2016large,dinh2017sharp,wu2017towards},
stability of optimization algorithms \citep{bousquet2002stability,xu2012robustness,hardt2015train},
and implicit bias of the training process \citep{neyshabur2014search,poggio2018theory,soudry2018implicit,arpit2017closer,xu_training_2018}.
On the other hand, several works focus on the failure of DNNs \citep{shalev2017failures,nye2018efficient},
e.g., fitting the parity function, in which a well-trained DNN possesses
no generalization ability.

In the revised version, \cite{rahaman2018spectral} also examines
the F-Principle in the MNIST dataset. However, they add noise to MNIST,
which contaminates the labels. They only examine not very deep (6-layer)
fully connected ReLU network with MSE loss, while cross-entropy loss
is widely used.

\section*{Acknowledgments}

The authors want to thank Weinan E (Princeton University) and David W. McLaughlin (New York University) for helpful discussions
and thank Yang Yuan (Tsinghua University) and Zhanxing Zhu (Peking University) for critically reading the
manuscript. Part of this work was done when ZX, YZ, YX are postdocs at New York University Abu Dhabi and visiting members at Courant Institute supported by the NYU Abu Dhabi Institute
G1301. The authors declare no conflict of interest.

\bibliographystyle{iclr2020_conference}
\bibliography{DLRef}

\appendix

\section{Illustration of F-Principle for $1$-d synthetic data \label{sec:1dexp}}

To illustrate the phenomenon of F-Principle, we use $1$-d synthetic
data to show the evolution of relative training error at different
frequencies during the training of DNN. we train a DNN to fit a $1$-d
target function $f(x)=\sin(x)+\sin(3x)+\sin(5x)$ of three frequency
components. On $n=201$ evenly spaced training samples, i.e., $\{x_{i}\}_{i=0}^{n-1}$
in $[-3.14,3.14]$, the discrete Fourier transform (DFT) of $f(x)$
or the DNN output (denoted by $h(x)$) is computed by $\hat{f}_{k}=\frac{1}{n}\sum_{i=0}^{n-1}f(x_{i})\E^{-\I2\pi ik/n}$
and $\hat{h}_{k}=\frac{1}{n}\sum_{i=0}^{n-1}h(x_{i})\E^{-\I2\pi jk/n}$,
where $k$ is the frequency. As shown in Fig.~\ref{fig:onelayer}(a),
the target function has three important frequencies as we design (black
dots at the inset in Fig.~\ref{fig:onelayer}(a)). To examine the
convergence behavior of different frequency components during the
training with MSE, we compute the relative difference of the DNN output
and the target function for the three important frequencies $k$'s
at each recording step, that is, $\Delta_{F}(k)=|\hat{h}_{k}-\hat{f}_{k}|/|\hat{f}_{k}|$,
where $|\cdot|$ denotes the norm of a complex number. As shown in
Fig.~\ref{fig:onelayer}(b), the DNN converges the first frequency
peak very fast, while converging the second frequency peak much slower,
followed by the third frequency peak.

Next, we investigate the F-Principle on real datasets with more general
loss functions other than MSE which was the only loss studied in the
previous works \citep{xu_training_2018,rahaman2018spectral}. All
experimental details can be found in Appendix. \ref{sec:Experimental-settings}. 
\begin{center}
\begin{figure}
\begin{centering}
\includegraphics[scale=0.6]{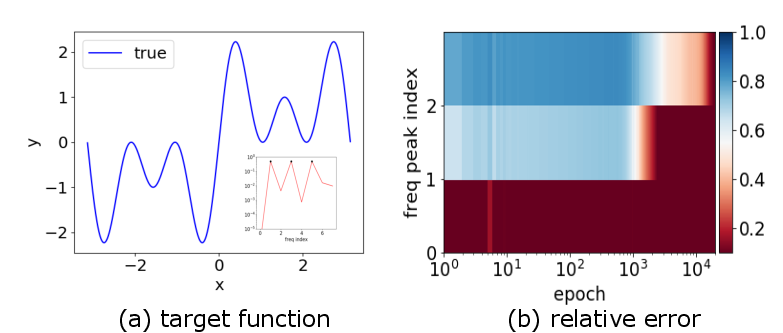} 
\par\end{centering}
\caption{1d input. (a) $f(x)$. Inset : $|\hat{f}(k)|$. (b) $\Delta_{F}(k)$
of three important frequencies (indicated by black dots in the inset
of (a)) against different training epochs.\label{fig:onelayer} }
\end{figure}
\par\end{center}

\section{Experimental settings\label{sec:Experimental-settings}}

In Fig. \ref{fig:onelayer}, the parameters of the DNN is initialized
by a Gaussian distribution with mean $0$ and standard deviation $0.1$.
We use a tanh-DNN with widths $1$-$8000$-$1$ with full batch training. The
learning rate is $0.0002$. The DNN is trained by Adam optimizer \citep{kingma2014adam}
with the MSE loss function.

In Fig. \ref{fig:CMFT}, for MNIST dataset, the training process of
a tanh-DNN with widths $784$-$400$-$200$-$10$ is shown in Fig.~\ref{fig:CMFT}(a)
and \ref{fig:CMFT}(b). For CIFAR10 dataset, results are shown in
Fig.~\ref{fig:CMFT}(c) and \ref{fig:CMFT}(d) of a ReLU-CNN, which
consists of one convolution layer of $3\times3\times64$, a max pooling
of $2\times2$, one convolution layer of $3\times3\times128$, a max
pooling of $2\times2$, followed by a fully-connected DNN with widths
$800$-$400$-$400$-$400$-$10$. For both cases, the output layer
of the network is equipped with a softmax. The network output is a
$10$-d vector. The DNNs are trained with cross entropy loss by Adam
optimizer \citep{kingma2014adam}. (a, b) are for MNIST with a tanh-DNN.
The learning rate is $0.001$ with batch size $10000$. After training,
the training accuracy is $0.951$ and test accuracy is $0.963$. The
amplitude of the Fourier coefficient with respect to the fourth output
component at each frequency is shown in (a), in which the red dots
are computed using the training data. Selected frequencies are marked
by black squares. (b) $\Delta_{F}(k)$ at different training epochs
for the selected frequencies. (c, d) are for CIFAR10 dataset. We use
a ReLU network of a CNN followed by a fully-connected DNN. The learning
rate is $0.003$ with batch size $512$. (c) and (d) are the results
with respect to the ninth output component. After training, the training
accuracy is $0.98$ and test accuracy is $0.72$.

In Fig. \ref{fig:Noisefitting-Mnist}, for MNIST, we use a fully-connected
tanh-DNN with widths $784$-$400$-$200$-$10$ and MSE loss; for
CIFAR10, we use cross-entropy loss and a ReLU-CNN, which consists
of one convolution layer of $3\times3\times32$, a max pooling of
$2\times2$, one convolution layer of $3\times3\times64$, a max pooling
of $2\times2$, followed by a fully-connected DNN with widths $400$-$10$
and the output layer of the network is equipped with a softmax. The
learning rate for MNIST and CIFAR10 is $0.015$ and $0.003$, respectively.
The networks are trained by Adam optimizer \citep{kingma2014adam}
with batch size $10000$. For VGG16, the learning rate is $10^{-5}$. The network is trained by Adam optimizer
\citep{kingma2014adam} with batch size $500$.

In Fig. \ref{fig:Poisson}, the samples are evenly spaced in $[0,1]$
with sample size $1001$. We use a DNN with widths 1-4000-500-400-1
and full batch training by Adam optimizer \citep{kingma2014adam}.
The learning rate is $0.0005$. $\beta$ is $10$. The parameters
of the DNN are initialized following a Gaussian distribution with
mean $0$ and standard deviation $0.02$.

In Fig. \ref{fig:parity}, the settings of (a) and (b) are the same
as the ones in Fig.~\ref{fig:CMFT}. For (c), we use a tanh-DNN with
widths 10-500-100-1, learning rate $0.0005$ under full batch-size
training by Adam optimizer \citep{kingma2014adam}. The parameters
of the DNN are initialized by a Gaussian distribution with mean $0$
and standard deviation $0.05$.

\section{Central difference scheme and Jacobi method\label{sec:AppPoisson's-equations}}

Consider a one-dimensional (1-d) Poisson's equation: 
\begin{equation}
    -\Delta u(x)=g(x),\quad x\in\Omega=(-1,1)\label{eq:Poisson1}
\end{equation}
\begin{equation*}
    u(x)=0,\quad x=-1,1.
\end{equation*}

$[-1,1]$ is uniformly discretized into $n+1$ points with grid size
$h=2/n$. The Poisson's equation in Eq.~(\ref{eq:Poisson1}) can
be solved by the central difference scheme, 
\begin{equation}
    -\Delta u_{i}=-\frac{u_{i+1}-2u_{i}+u_{i-1}}{(\delta x)^{2}}=g(x_{i}),\quad i=1,2,\cdots,n,
\end{equation}
resulting a linear system 
\begin{equation}
    \mat{A}\vec{u}=\vec{g},\label{eq:auf}
\end{equation}
where 
\begin{equation}
    \mat{A}=\left(\begin{array}{cccccc}
    2 & -1 & 0 & 0 & \cdots & 0\\
    -1 & 2 & -1 & 0 & \cdots & 0\\
    0 & -1 & 2 & -1 & \cdots & 0\\
    \vdots & \vdots & \cdots &  &  & \vdots\\
    0 & 0 & \cdots & 0 & -1 & 2
    \end{array}\right)_{(n-1)\times(n-1)},
\end{equation}

\begin{equation}
    \vec{u}=\left(\begin{array}{c}
    u_{1}\\
    u_{2}\\
    \vdots\\
    u_{n-2}\\
    u_{n-1}
    \end{array}\right),\quad
    \vec{g}=(\delta x)^{2}\left(\begin{array}{c}
    g_{1}\\
    g_{2}\\
    \vdots\\
    g_{n-2}\\
    g_{n-1}
    \end{array}\right),\quad x_{i}=2\frac{i}{n}.
\end{equation}
A class of methods to solve this linear system is iterative schemes,
for example, the Jacobi method. Let $\mat{A}=\mat{D}-\mat{L}-\mat{U}$, where $\mat{D}$ is the
diagonal of $\mat{A}$, and $\mat{L}$ and $\mat{U}$ are the strictly lower and upper
triangular parts of $-\mat{A}$, respectively. Then, we obtain 
\begin{equation}
    \vec{u}=\mat{D}^{-1}(\mat{L}+\mat{U})\vec{u}+\mat{D}^{-1}\vec{g}.
\end{equation}
At step $t\in\N$, the Jacobi iteration reads as
\begin{equation}
    \vec{u}^{t+1}=\mat{D}^{-1}(\mat{L}+\mat{U})\vec{u}^{t}+\mat{D}^{-1}\vec{g}.
\end{equation}
We perform the standard error analysis of the above iteration process.
Denote $\vec{u}^{*}$ as the true value obtained by directly performing
inverse of $\mat{A}$ in Eq.~(\ref{eq:auf}). The error at step $t+1$
is $\vec{e}^{t+1}=\vec{u}^{t+1}-\vec{u}^{*}$. Then, $\vec{e}^{t+1}=\mat{R}_{J}\vec{e}^{t}$,
where $\mat{R}_{J}=\mat{D}^{-1}(\mat{L}+\mat{U})$. The converging speed of $\vec{e}^{t}$
is determined by the eigenvalues of $\mat{R}_{J}$, that is, 
\begin{equation}
    \lambda_{k}=\lambda_{k}(\mat{R}_{J})=\cos\frac{k\pi}{n},\quad k=1,2,\cdots,n-1,
\end{equation}
and the corresponding eigenvector $\vec{v}_{k}$'s entry is 
\begin{equation}
    v_{k,i}=\sin\frac{ik\pi}{n},i=1,2,\cdots,n-1.
\end{equation}
So we can write 
\begin{equation}
    \vec{e}^{t}=\sum_{k=1}^{n-1}\alpha_{k}^{t}\vec{v}_{k},
\end{equation}
where $\alpha_{k}^{t}$ can be understood as the magnitude of $\vec{e}^{t}$
in the direction of $\vec{v}_{k}$. Then, 
\begin{equation}
    \vec{e}^{t+1}=\sum_{k=1}^{n-1}\alpha_{k}^{t}\mat{R}_{J}\vec{v}_{k}=\sum_{k=1}^{n-1}\alpha_{k}^{t}\lambda_{k}\vec{v}_{k}.
\end{equation}
\begin{equation*}
    \alpha_{k}^{t+1}=\lambda_{k}\alpha_{k}^{t}.
\end{equation*}
Therefore, the converging rate of $\vec{e}^{t}$ in the direction
of $\vec{v}_{k}$ is controlled by $\lambda_{k}$. Since 
\begin{equation}
    \cos\frac{k\pi}{n}=-\cos\frac{(n-k)\pi}{n},
\end{equation}
the frequencies $k$ and $(n-k)$ are closely related and converge
with the same rate. Consider the frequency $k<n/2$, $\lambda_{k}$
is larger for lower frequency. Therefore, lower frequency converges
slower in the Jacobi method.

\section{Proof of theorems\label{sec:Proof-of-theorem1}}

The activation function we consider is $\sigma(x)=\tanh(x)$. 
\begin{equation*}
    \sigma(x)=\tanh(x)=\frac{\E^{x}-\E^{-x}}{\E^{x}+\E^{-x}},\quad x\in\mathbb{R}.
\end{equation*}
For a DNN of one hidden layer with $m$ nodes, 1-d input $x$ and
1-d output: 
\begin{equation}
    h(x)=\sum_{j=1}^{m}a_{j}\sigma(w_{j}x+b_{j}),\quad a_{j},w_{j},b_{j}\in{\rm \mathbb{R}},\label{eq: DNNmath}
\end{equation}
where $w_{j}$, $a_{j}$, and $b_{j}$ are called \emph{parameters},
in particular, $w_{j}$ and $a_{j}$ are called \emph{weights}, and
$b_{j}$ is also known as a \emph{bias}. In the sequel, we will also
use the notation $\theta=\{\theta_{lj}\}$ with $\theta_{1j}=a_{j}$,
$\theta_{2j}=w_{j}$, and $\theta_{lj}=b_{j}$, $j=1,\cdots,m$. Note
that $\hat{\sigma}(k)=-\frac{\I\pi}{\sinh(\pi k/2)}$ where
the Fourier transformation and its inverse transformation are defined
as follows: 
\begin{equation*}
    \hat{f}(k)=\int_{-\infty}^{+\infty}f(x)\E^{-\I kx}\diff{x},\quad f(x)=\frac{1}{2\pi}\int_{-\infty}^{+\infty}\hat{f}(k)\E^{\I kx}\diff{k}.
\end{equation*}
The Fourier transform of $\sigma(w_{j}x+b_{j})$ with $w_{j},b_{j}\in{\rm \mathbb{R}}$,
$j=1,\cdots,m$ reads as 
\begin{equation}
    \widehat{\sigma(w_{j}\cdot+b_{j})}(k)=\frac{2\pi\I}{|w_{j}|}\exp\Big(\frac{\I b_{j}k}{w_{j}}\Big)\frac{1}{\exp(-\frac{\pi k}{2w_{j}})-\exp(\frac{\pi k}{2w_{j}})}.\label{eq:FSigOri}
\end{equation}
Thus 
\begin{equation}
    \hat{h}(k)=\sum_{j=1}^{m}\frac{2\pi a_{j}\I}{|w_{j}|}\exp\Big(\frac{\I b_{j}k}{w_{j}}\Big)\frac{1}{\exp(-\frac{\pi k}{2w_{j}})-\exp(\frac{\pi k}{2w_{j}})}.\label{eq:FTW}
\end{equation}
We define the amplitude deviation between DNN output and the \emph{target
function} $f(x)$ at frequency $k$ as 
\begin{equation*}
    D(k)\triangleq\hat{h}(k)-\hat{f}(k).
\end{equation*}
Write $D(k)$ as $D(k)=A(k)\E^{\I\phi(k)}$, where $A(k)\in[0,+\infty)$
and $\phi(k)\in\mathbb{R}$ are the amplitude and phase of $D(k)$,
respectively. The loss at frequency $k$ is $L(k)=\frac{1}{2}\left|D(k)\right|^{2}$,
where $|\cdot|$ denotes the norm of a complex number. The total loss
function is defined as: $L=\int_{-\infty}^{+\infty}L(k)\diff{k}$.
Note that according to the Parseval's theorem, this loss function
in the Fourier domain is equal to the commonly used loss of mean squared
error, that is, $L=\int_{-\infty}^{+\infty}\frac{1}{2}(h(x)-f(x))^{2}\diff{x}$.
For readers' reference, we list the partial derivatives of $L(k)$
with respect to parameters 
\begin{align}
    \frac{\partial L(k)}{\partial a_{j}} & =\frac{2\pi}{w_{j}}\sin\Big(\frac{b_{j}k}{w_{j}}-\phi(k)\Big)E_{0},\label{eq:DLalpha}\\
    \frac{\partial L(k)}{\partial w_{j}} & =\Bigg[\sin\Big(\frac{b_{j}k}{w_{j}}-\phi(k)\Big)\left(\frac{\pi^{2}a_{j}k}{w_{j}^{3}}E_{1}-\frac{2\pi a_{j}}{w_{j}^{2}}\right)\nonumber \\
     & \quad{}-\frac{2\pi a_{j}b_{j}k}{w_{j}^{3}}\cos\Big(\frac{b_{j}k}{w_{j}}-\phi(k)\Big)\Bigg]E_{0},\label{eq:paritialLA}\\
    \frac{\partial L(k)}{\partial b_{j}} & =\frac{2\pi a_{j}b_{j}k}{w_{j}^{2}}\cos\Big(\frac{b_{j}k}{w_{j}}-\phi(k)\Big)E_{0},\label{eq:paritialLB}
\end{align}
where 
\begin{equation*}
    E_{0}=\frac{\mathrm{sgn}(w_{j})A(k)}{\exp(\frac{\pi k}{2w_{j}})-\exp(-\frac{\pi k}{2w_{j}})},
\end{equation*}
\begin{equation*}
    E_{1}=\frac{\exp(\frac{\pi k}{2w_{j}})+\exp(-\frac{\pi k}{2w_{j}})}{\exp(\frac{\pi k}{2w_{j}})-\exp(-\frac{\pi k}{2w_{j}})}.
\end{equation*}
The descent increment at any direction, say, with respect to parameter
$\theta_{lj}$, is 
\begin{equation}
    \frac{\partial L}{\partial\theta_{lj}}=\int_{-\infty}^{+\infty}\frac{\partial L(k)}{\partial\theta_{lj}}\diff{k}.\label{eq:GDfreq}
\end{equation}
The absolute contribution from frequency $k$ to this total amount
at $\theta_{lj}$ is 
\begin{equation}
    \left|\frac{\partial L(k)}{\partial\theta_{lj}}\right|\approx A(k)\exp\left(-|\pi k/2w_{j}|\right)F_{lj}(\theta_{j},k),\label{eq:DL2}
\end{equation}
where $\theta_{j}\triangleq\{w_{j},b_{j},a_{j}\}$, $\theta_{lj}\in\theta_{j}$,
$F_{lj}(\theta_{j},k)$ is a function with respect to $\theta_{j}$
and $k$, which can be found in one of Eqs. (\ref{eq:DLalpha}, \ref{eq:paritialLA},
\ref{eq:paritialLB}).

When the component at frequency $k$ where $\hat{h}(k)$ is not close
enough to $\hat{f}(k)$, $\exp\left(-|\pi k/2w_{j}|\right)$ would
dominate $G_{lj}(\theta_{j},k)$ for a small $w_{j}$. Through the
above framework of analysis, we have the following theorem. Define
\begin{equation}
    W=(w_{1},w_{2},\cdots,w_{m})^{T}\in\mathbb{R}^{m}.
\end{equation}

\begin{thm*}\label{thm:Priority-1}Consider a one hidden layer DNN with activation
    function $\sigma(x)=\tanh{x}$. For any frequencies $k_{1}$ and $k_{2}$
    such that $|\hat{f}(k_{1})|>0$, $|\hat{f}(k_{2})|>0$, and $|k_{2}|>|k_{1}|>0$,
    there exist positive constants $c$ and $C$ such that for sufficiently
    small $\delta$, we have 
    \begin{align}
    \frac{\mu\left(\left\{ W:\left|\frac{\partial L(k_{1})}{\partial\theta_{lj}}\right|>\left|\frac{\partial L(k_{2})}{\partial\theta_{lj}}\right|\quad\text{for all}\quad l,j\right\} \cap B_{\delta}\right)}{\mu(B_{\delta})}\nonumber \\
    \geq1-C\exp(-c/\delta),\label{eq:thm1proof}
    \end{align}
    where $B_{\delta}\subset\mathbb{R}^{m}$ is a ball with radius $\delta$
    centered at the origin and $\mu(\cdot)$ is the Lebesgue measure. 
\end{thm*}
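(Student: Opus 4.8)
The plan is to read off the dominance directly from the approximate gradient formula~\eqref{eq:DL2}, in which the only factor varying faster than polynomially in $1/|w_j|$ is $\exp(-|\pi k/2w_j|)$, while $A(k)$ and $F_{lj}(\theta_j,k)$ contribute at most powers of $1/w_j$ together with bounded trigonometric terms. First I would record that the hypothesis $|\hat f(k_1)|,|\hat f(k_2)|>0$ is exactly what keeps the comparison nondegenerate: as $W\to\vec 0$ every term of $\hat h(k_i)$ in~\eqref{eq:FTW} carries a factor $\exp(-\pi|k_i|/2|w_j|)\to0$, so $D(k_i)\to-\hat f(k_i)$, whence $A(k_i)\to|\hat f(k_i)|>0$ stays bounded below and $\phi(k_i)\to\arg(-\hat f(k_i))$ stabilizes to a constant. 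Thus on a small ball $B_\delta$ the amplitudes and phases may be treated as essentially fixed, and only the explicit $w_j$-dependence through $\exp(-\pi|k|/2|w_j|)$ and the oscillation $\sin(b_jk/w_j-\phi(k))$ (or its cosine analogue) matters.

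Next I would fix one parameter $\theta_{lj}$ and form the quotient of the two gradients. Dividing~\eqref{eq:DL2} at $k_2$ by the same expression at $k_1$ produces a prefactor ratio times $\exp\!\big(-\tfrac{\pi}{2|w_j|}(|k_2|-|k_1|)\big)$, and since $|k_2|>|k_1|$ this exponential tends to $0$ faster than any power of $|w_j|$ as $w_j\to0$. Hence the inequality $|\partial L(k_1)/\partial\theta_{lj}|>|\partial L(k_2)/\partial\theta_{lj}|$ can fail only on the \emph{bad set} where the oscillating factor of $F_{lj}$ evaluated at $k_1$ is itself exponentially small in $1/|w_j|$; for the $a_j$-derivative, after crudely bounding the $k_2$-factor and the polynomial prefactors, this set is contained in
\begin{equation*}
\big\{\,|w_j|\le\delta:\ \big|\sin\!\big(\tfrac{b_jk_1}{w_j}-\phi(k_1)\big)\big|<C'\exp\!\big(-\tfrac{c'}{|w_j|}\big)\,\big\},\qquad c'=\tfrac{\pi}{2}(|k_2|-|k_1|).
\end{equation*}
The $w_j$- and $b_j$-derivatives take the same shape after rewriting $\sin(\psi)P_1-\cos(\psi)P_2=R\sin(\psi-\chi)$ with $\psi=b_jk/w_j-\phi(k)$ and a phase shift $\chi$ that itself stabilizes as $w_j\to0$, so all three derivatives reduce to controlling a single oscillating sine.

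The crux is the one-dimensional measure of this bad set. With $\psi=b_jk_1/w_j-\phi(k_1)$ the zeros of $\sin\psi$ sit at $w_j\approx b_jk_1/(n\pi)$, $n\to\infty$, accumulating at the origin; near the $n$-th zero $|\sin\psi|\approx|\psi-n\pi|$, so the bad window has $\psi$-width $O(\exp(-c'/|w_j|))$, which pulls back through $|\D w_j|\sim(w_j^2/|b_jk_1|)\,|\D\psi|$ to a $w_j$-window of width $\lesssim n^{-2}\exp(-c'n\pi/|b_jk_1|)$. Summing this geometric series over $n\ge n_0\sim|b_jk_1|/(\pi\delta)$ gives a total bad measure $\lesssim\delta^2\exp(-c'/\delta)$, dominated by the first term. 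I would then lift to $\R^m$: each bad condition constrains only the coordinate $w_j$, so Fubini bounds the $m$-dimensional bad measure by $\mu_{m-1}(B^{m-1}_\delta)$ times the one-dimensional estimate; dividing by $\mu(B_\delta)=V_m\delta^m$ leaves a ratio $\lesssim\delta\exp(-c'/\delta)$. A union bound over the $3m$ parameters $\theta_{lj}$ and a harmless decrease of the constant $c<c'$ (to swallow the factor $\delta$) then deliver the claimed lower bound $1-C\exp(-c/\delta)$.

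The step I expect to be the main obstacle is exactly this measure estimate for the oscillatory prefactor. One must check that $\phi(k_1)$---which in principle depends on $W$---drifts slowly enough not to upset the counting of zeros, which is guaranteed by the stabilization $\phi(k_1)\to\arg(-\hat f(k_1))$ noted above, and then correctly sum the windows shrinking super-exponentially as they accumulate at $w_j=0$ while converting $\psi$-measure to $w_j$-measure. Non-generic configurations (for instance $b_j=0$, where the sine ceases to oscillate, or a real $\hat f(k_1)$, where the limiting phase makes the sine vanish) are lower-dimensional and are dispatched separately; in the bulk case $b_j\neq0$ the oscillation drives the whole estimate. Everything else---the exponential beating the $1/w_j$ polynomials, the reduction of the three derivatives to one sine, and the union bound---is routine.
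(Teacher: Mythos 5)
Your proposal is correct and follows essentially the same route as the paper's proof: reduce to per-parameter bad sets $\left\{\left|\frac{\partial L(k_{1})}{\partial\theta_{lj}}\right|\leq\left|\frac{\partial L(k_{2})}{\partial\theta_{lj}}\right|\right\}$, use the stabilization $A(k)\to|\hat{f}(k)|$, $\phi(k)\to\pi+\arg(\hat{f}(k))$ as $W\to0$ so that the exponential ratio $\exp\bigl(-\pi(k_{2}-k_{1})/2|w_{j}|\bigr)$ forces $\bigl|\sin(b_{j}k_{1}/w_{j}-\phi(k_{1}))\bigr|$ to be exponentially small on the bad set, bound that set by a union of intervals around the zeros $w_{j}\approx b_{j}k_{1}/(q\pi)$ accumulating at the origin, sum the series, and finish with Fubini over the remaining coordinates plus a union bound over $l,j$, absorbing polynomial factors of $\delta$ into the exponential. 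Your refinements (the $R\sin(\psi-\chi)$ reduction for the $w_{j}$- and $b_{j}$-derivatives, the per-window rather than uniform exponential bound, and the explicit check that the drift of $\phi(k_{1})$ is harmless) elaborate steps the paper treats as "similar" or implicit, but do not constitute a different argument.
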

We remark that $c$ and $C$ depend on $k_{1}$, $k_{2}$, $|\hat{f}(k_{1})|$,
$|\hat{f}(k_{2})|$, $\sup{|a_{i}|}$, $\sup{|b_{i}|}$, and $m$. 
\begin{proof}
    To prove the statement, it is sufficient to show that $\mu(S_{lj,\delta})/\mu(B_{\delta})\leq C\exp(-c/\delta)$
    for each $l,j$, where 
    \begin{equation}
        S_{lj,\delta}:=\left\{ W\in B_{\delta}:\left|\frac{\partial L(k_{1})}{\partial\theta_{lj}}\right|\leq\left|\frac{\partial L(k_{2})}{\partial\theta_{lj}}\right|\right\} .
    \end{equation}
    We prove this for $S_{1j,\delta}$, that is, $\theta_{lj}=a_{j}$.
    The proofs for $\theta_{lj}=w_{j}$ and $b_{j}$ are similar. Without
    loss of generality, we assume that $k_{1},k_{2}>0$, $b_{j}>0$, and
    $w_{j}\neq0$, $j=1,\cdots,m$. According to Eq.~(\ref{eq:DLalpha}),
    the inequality $|\frac{\partial L(k_{1})}{\partial a_{j}}|\leq|\frac{\partial L(k_{2})}{\partial a_{j}}|$
    is equivalent to 
    \begin{equation}
        \frac{A(k_{2})}{A(k_{1})}\Bigg|\frac{\exp(\frac{\pi k_{1}}{2w_{j}})-\exp(-\frac{\pi k_{1}}{2w_{j}})}{\exp(\frac{\pi k_{2}}{2w_{j}})-\exp(-\frac{\pi k_{2}}{2w_{j}})}\Bigg|\cdot\Big|\sin\Big(\frac{b_{j}k_{2}}{w_{j}}-\phi(k_{2})\Big)\Big|
        \geq\Big|\sin\Big(\frac{b_{j}k_{1}}{w_{j}}-\phi(k_{1})\Big)\Big|\label{eq..comparison}
    \end{equation}
    Note that $|\hat{h}(k)|\leq C\sum_{j=1}^{m}\frac{|a_{j}|}{|w_{j}|}\exp(-\frac{\pi k}{2|w_{j}|})$
    for $k>0$. Thus 
    \begin{equation}
        \lim_{W\rightarrow0}\hat{h}(k)=0\quad\text{and}\quad\lim_{W\rightarrow0}D(k)=-\hat{f}(k).
    \end{equation}
    Therefore, 
    \begin{equation}
        \lim_{W\rightarrow0}A(k)=|\hat{f}(k)|\quad\text{and}\quad\lim_{W\rightarrow0}\phi(k)=\pi+\arg(\hat{f}(k)).\label{eq..theta.k}
    \end{equation}
    For $W\in B_{\delta}$ with sufficiently small $\delta$, $A(k_{1})>\frac{1}{2}|\hat{f}(k_{1})|>0$
    and $A(k_{2})<2|\hat{f}(k_{2})|$. Also note that $|\sin(\frac{b_{j}k_{2}}{w_{j}}-\phi(k_{2}))|\leq1$
    and that for sufficiently small $\delta$, 
    \begin{equation}
        \Bigg|\frac{\exp(\frac{\pi k_{1}}{2w_{j}})-\exp(-\frac{\pi k_{1}}{2w_{j}})}{\exp(\frac{\pi k_{2}}{2w_{j}})-\exp(-\frac{\pi k_{2}}{2w_{j}})}\Bigg|\leq2\exp\Big(\frac{-\pi(k_{2}-k_{1})}{2|w_{j}|}\Big).
    \end{equation}
    Thus, inequality (\ref{eq..comparison}) implies that 
    \begin{equation}
        \Big|\sin\Big(\frac{b_{j}k_{1}}{w_{j}}-\phi(k_{1})\Big)\Big|\leq\frac{8|\hat{f}(k_{2})|}{|\hat{f}(k_{1})|}\exp\Big(-\frac{\pi(k_{2}-k_{1})}{2|w_{j}|}\Big).\label{eq:expineq}
    \end{equation}
    Noticing that $\frac{2}{\pi}|x|\leq|\sin x|$ ($|x|\leq\frac{\pi}{2}$)
    and Eq.~(\ref{eq..theta.k}), we have for $W\in S_{lj,\delta}$,
    $\text{for some}\quad q\in\mathbb{Z}$, 
    \begin{equation}
        \Big|\frac{b_{i}k_{1}}{w_{i}}-\arg(\hat{f}(k_{1}))-q\pi\Big|\leq\frac{8\pi|\hat{f}(k_{2})|}{|\hat{f}(k_{1})|}\exp\Big(-\frac{\pi(k_{2}-k_{1})}{2\delta}\Big)
    \end{equation}
    that is, 
    \begin{equation}
        -c_{1}\exp(-c_{2}/\delta)+q\pi+\arg(\hat{f}(k_{1}))\leq\frac{b_{i}k_{1}}{w_{i}}
        \leq c_{1}\exp(-c_{2}/\delta)+q\pi+\arg(\hat{f}(k_{1})),
    \end{equation}
    where $c_{1}=\frac{8\pi|\hat{f}(k_{2})|}{|\hat{f}(k_{1})|}$ and $c_{2}=\pi(k_{2}-k_{1})$.
    Define $I:=I^{+}\cup I^{-}$ where 
    \begin{equation}
        I^{+}:=\{w_{j}>0:W\in S_{1j,\delta}\},\quad I^{-}:=\{w_{j}<0:W\in S_{1j,\delta}\}.
    \end{equation}
    For $w_{j}>0$, we have for some $q\in\mathbb{Z}$, 
    \begin{equation}
        0<\frac{b_{j}k_{1}}{c_{1}\exp(-c_{2}/\delta)+q\pi+\arg(\hat{f}(k_{1}))}\leq w_{j}
        \leq\frac{b_{j}k_{1}}{-c_{1}\exp(-c_{2}/\delta)+q\pi+\arg(\hat{f}(k_{1}))}.\label{eq..wSmallMeasure}
    \end{equation}
    Since $W\in B_{\delta}$ and $c_{1}\exp(-c_{2}/\delta)+\arg(\hat{f}(k_{1}))\leq2\pi$,
    we have $\frac{b_{j}k_{1}}{2\pi+q\pi}\leq w_{j}\leq\delta$. Then
    Eq.~(\ref{eq..wSmallMeasure}) only holds for some large $q$, more
    precisely, $q\geq q_{0}:=\frac{b_{j}k}{\pi\delta}-2$. Thus we obtain
    the estimate for the (one-dimensional) Lebesgue measure of $I^{+}$
    \begin{align}
        \mu(I^{+}) 
        & \leq \sum_{q=q_{0}}^{\infty}\Bigg|\frac{b_{j}k_{1}}{-c_{1}\exp(-c_{2}/\delta)+q\pi+\arg(\hat{f}(k_{1}))}-\frac{b_{j}k_{1}}{c_{1}\exp(-c_{2}/\delta)+q\pi+\arg(\hat{f}(k_{1}))}\Bigg|\nonumber \\
        & \leq 2|b_{j}|k_{1}c_{1}\exp(-c_{2}/\delta)\cdot\sum_{q=q_{0}}^{\infty}\frac{1}{(q\pi+\arg(\hat{f}(k_{1})))^{2}-(c_{1}\exp(-c_{2}/\delta))^{2}}\nonumber \\
        & \leq C\exp(-c/\delta).
    \end{align}
    The similar estimate holds for $\mu(I^{-})$, and hence $\mu(I)\leq C\exp(-c/\delta)$.
    For $W\in B_{\delta}$, the $(m-1)$ dimensional vector $(w_{1},\cdots,w_{j-1},w_{j+1},\cdots,w_{m})^{T}$
    is in a ball with radius $\delta$ in $\mathbb{R}^{m-1}$. Therefore,
    we final arrive at the desired estimate 
    \begin{equation}
        \frac{\mu(S_{1j,\delta})}{\mu(B_{\delta})}\leq\frac{\mu(I)\omega_{m-1}\delta^{m-1}}{\omega_{m}\delta^{m}}\leq C\exp(-c/\delta),
    \end{equation}
    where $\omega_{m}$ is the volume of a unit ball in $\mathbb{R}^{m}$. 
\end{proof}
\begin{thm*}
    \label{thm:Priority-2-1} Considering a DNN of one hidden layer with
    activation function $\sigma(x)=\tanh(x)$. Suppose the target function
    has only two non-zero frequencies $k_{1}$ and $k_{2}$, that is,
    $|\hat{f}(k_{1})|>0$, $|\hat{f}(k_{2})|>0$, and $|k_{2}|>|k_{1}|>0$,
    and $|\hat{f}(k)|=0$ for $k\neq k_{1},k_{2}$. Consider the  loss function of $L=L(k_{1})+L(k_{2})$ with gradient descent training. Denote 
    \begin{equation*}
        \mathcal{S}=\left\{ \frac{\partial L(k_{1})}{\partial t}\leq0,\frac{\partial L(k_{1})}{\partial t}\leq\frac{\partial L(k_{2})}{\partial t}\right\} ,
    \end{equation*}
    that is, $L(k_{1})$ decreases faster than $L(k_{2})$. There exist
    positive constants $c$ and $C$ such that for sufficiently small
    $\delta$, we have 
    \begin{equation*}
        \frac{\mu\left(\left\{ W:\mathcal{S}\quad{\rm holds}\right\} \cap B_{\delta}\right)}{\mu(B_{\delta})}\geq1-C\exp(-c/\delta),
    \end{equation*}
    where $B_{\delta}\subset\mathbb{R}^{m}$ is a ball with radius $\delta$
    centered at the origin and $\mu(\cdot)$ is the Lebesgue measure. 
\end{thm*}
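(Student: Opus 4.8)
The plan is to reduce the two training-dynamics inequalities defining $\mathcal{S}$ to a single comparison of gradient norms, and then to read off that comparison directly from Theorem~\ref{thm:Priority}. Under gradient flow one has $\frac{\partial\theta_{lj}}{\partial t}=-\frac{\partial L}{\partial\theta_{lj}}$, so by the chain rule together with $L=L(k_1)+L(k_2)$,
\begin{equation*}
\frac{\partial L(k_i)}{\partial t}=\sum_{l,j}\frac{\partial L(k_i)}{\partial\theta_{lj}}\frac{\partial\theta_{lj}}{\partial t}=-\sum_{l,j}\frac{\partial L(k_i)}{\partial\theta_{lj}}\Big(\frac{\partial L(k_1)}{\partial\theta_{lj}}+\frac{\partial L(k_2)}{\partial\theta_{lj}}\Big).
\end{equation*}
Writing $G_i$ for the gradient vector with components $\partial L(k_i)/\partial\theta_{lj}$, this says $\frac{\partial L(k_1)}{\partial t}=-\norm{G_1}^2-G_1\cdot G_2$ and $\frac{\partial L(k_2)}{\partial t}=-\norm{G_2}^2-G_1\cdot G_2$, where the cross term $G_1\cdot G_2$ is common to both.

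First I would simplify the second inequality in $\mathcal{S}$. Subtracting the two expressions, the common term cancels, so $\frac{\partial L(k_1)}{\partial t}\le\frac{\partial L(k_2)}{\partial t}$ is exactly equivalent to $\norm{G_1}^2\ge\norm{G_2}^2$, i.e.\ $\norm{G_1}\ge\norm{G_2}$. Next I would show the first inequality in $\mathcal{S}$ is then automatic: by Cauchy--Schwarz, $G_1\cdot G_2\ge-\norm{G_1}\norm{G_2}$, so whenever $\norm{G_1}\ge\norm{G_2}$ we get
\begin{equation*}
\frac{\partial L(k_1)}{\partial t}=-\norm{G_1}^2-G_1\cdot G_2\le-\norm{G_1}^2+\norm{G_1}\norm{G_2}=-\norm{G_1}\big(\norm{G_1}-\norm{G_2}\big)\le0.
\end{equation*}
Hence on $B_\delta$ the event $\mathcal{S}$ coincides exactly with the single event $\{\norm{G_1}\ge\norm{G_2}\}$.

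It then only remains to bound the relative measure of $\{W\in B_\delta:\norm{G_1}\ge\norm{G_2}\}$, and here I would invoke Theorem~\ref{thm:Priority} directly. The componentwise domination it establishes, $\big|\partial L(k_1)/\partial\theta_{lj}\big|>\big|\partial L(k_2)/\partial\theta_{lj}\big|$ for all $l,j$, implies term by term that $\norm{G_1}^2=\sum_{l,j}\big(\partial L(k_1)/\partial\theta_{lj}\big)^2>\sum_{l,j}\big(\partial L(k_2)/\partial\theta_{lj}\big)^2=\norm{G_2}^2$, hence $\norm{G_1}>\norm{G_2}$ and $\mathcal{S}$ holds. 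Thus the high-probability set of Theorem~\ref{thm:Priority} is contained in $\{W\in B_\delta:\mathcal{S}\text{ holds}\}$, and monotonicity of $\mu$ gives
\begin{equation*}
\frac{\mu\left(\left\{W:\mathcal{S}\text{ holds}\right\}\cap B_\delta\right)}{\mu(B_\delta)}\ge1-C\exp(-c/\delta)
\end{equation*}
with the same constants $c,C$. The only genuine content beyond Theorem~\ref{thm:Priority} is the elementary algebraic reduction of the two dynamical inequalities to the norm comparison (including the Cauchy--Schwarz step that makes monotone decrease of $L(k_1)$ free). The real difficulty --- showing that small weights force the lower-frequency gradient to dominate --- has already been absorbed into Theorem~\ref{thm:Priority}, so I expect no further obstacle here; the proof is short precisely because that work is done.
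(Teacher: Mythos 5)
Your proposal is correct and takes essentially the same approach as the paper: both compute the gradient-flow derivatives of $L(k_1)$ and $L(k_2)$, cancel the common cross term so that $\partial L(k_1)/\partial t\leq\partial L(k_2)/\partial t$ reduces to a comparison of squared gradient norms, deduce that comparison from the componentwise domination given by Theorem~\ref{thm:Priority}, and inherit its measure bound with the same constants. The only cosmetic difference is in how the sign condition $\partial L(k_1)/\partial t\leq 0$ is obtained --- you use Cauchy--Schwarz, while the paper combines the comparison inequality with $\partial L/\partial t=-\sum_{l,j}\bigl(\partial L(k_1)/\partial\theta_{lj}+\partial L(k_2)/\partial\theta_{lj}\bigr)^{2}\leq0$.
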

\begin{proof}
    By gradient descent algorithm, we obtain
    \begin{align*}
        \frac{\partial L(k_{1})}{\partial t} & =\sum_{l,j}\frac{\partial L(k_{1})}{\partial\theta_{lj}}\frac{\partial\theta_{lj}}{\partial t}\\
        & =-\sum_{l,j}\frac{\partial L(k_{1})}{\partial\theta_{lj}}\frac{\partial(L(k_{1})+L(k_{2}))}{\partial\theta_{lj}}\\
        & =-\sum_{l,j}\left(\frac{\partial L(k_{1})}{\partial\theta_{lj}}\right)^{2}-\sum_{l,j}\frac{\partial L(k_{1})}{\partial\theta_{lj}}\frac{\partial L(k_{2})}{\partial\theta_{lj}},
    \end{align*}
    \begin{equation*}
        \frac{\partial L(k_{2})}{\partial t}=-\sum_{l,j}\left(\frac{\partial L(k_{2})}{\partial\theta_{lj}}\right)^{2}-\sum_{l,j}\frac{\partial L(k_{1})}{\partial\theta_{lj}}\frac{\partial L(k_{2})}{\partial\theta_{lj}},
    \end{equation*}
     and 
    \begin{equation}
        \frac{\partial L}{\partial t}=\frac{\partial\left(L(k_{1})+L(k_{2})\right)}{\partial t}=-\sum_{l,j}\left(\frac{\partial L(k_{1})}{\partial\theta_{lj}}+\frac{\partial L(k_{2})}{\partial\theta_{lj}}\right)^{2}\leq0.\label{eq:losssmall0}
    \end{equation}
    To obtain
    \begin{equation}
        0<\frac{\partial L(k_{1})}{\partial t}-\frac{\partial L(k_{2})}{\partial t}=-\sum_{l,j}\left[\left(\frac{\partial L(k_{1})}{\partial\theta_{lj}}\right)^{2}-\left(\frac{\partial L(k_{2})}{\partial\theta_{lj}}\right)^{2}\right],\label{eq:loss1smallloss2}
    \end{equation}
    it is sufficient to have 
    \begin{equation}
        \left|\frac{\partial L(k_{1})}{\partial\theta_{lj}}\right|>\left|\frac{\partial L(k_{2})}{\partial\theta_{lj}}\right|.\label{eq:lk1largelk2}
    \end{equation}
    Eqs. (\ref{eq:losssmall0}, \ref{eq:loss1smallloss2}) also yield
    to
    \begin{equation*}
        \frac{\partial L(k_{1})}{\partial t}<0.
    \end{equation*}
    Therefore, Eq. (\ref{eq:lk1largelk2}) is a sufficient condition for
    $\mathcal{S}$. Based on the theorem 1, we have proved the theorem
    2.
\end{proof}

\section{Memorizing $2$-d image\label{subsec:2d} }

We train a DNN to fit a natural image (See Fig.~\ref{fig:2dImg}(a)),
a mapping from coordinate $(x,y)$ to gray scale strength, where the
latter is subtracted by its mean and then normalized by the maximal
absolute value. First, we initialize DNN parameters by a Gaussian
distribution with mean $0$ and standard deviation $0.08$ (initialization
with small parameters). From the snapshots during the training process,
we can see that the DNN captures the image from coarse-grained low
frequencies to detailed high frequencies (Fig.~\ref{fig:2dImg}(b)).
As an illustration of the F-Principle, we study the Fourier transform
of the image with respect to $x$ for a fixed $y$ (red dashed line
in Fig.~\ref{fig:2dImg}(a), denoted as the target function $f(x)$
in the spatial domain). The DNN can well capture this 1-d slice after
training as shown in Fig.~\ref{fig:2dImg}(c). Fig.~\ref{fig:2dImg}(d)
displays the amplitudes $|\hat{f}(k)|$ of the first $40$ frequency
components. Due to the small initial parameters, as an example in
Fig.~\ref{fig:2dImg}(d), when the DNN is fitting low-frequency components,
high frequencies stay relatively small. As the relative error shown
in Fig.~\ref{fig:2dImg}(e), the first five frequency peaks converge
from low to high in order.

Next, we initialize DNN parameters by a Gaussian distribution with
mean $0$ and standard deviation $1$ (initialization with large parameters).
After training, the DNN can well capture the training data, as shown
in the left in Fig.~\ref{fig:2dImg}(f). However, the DNN output
at the test pixels are very noisy, as shown in the right in Fig.~\ref{fig:2dImg}(f).
For the pixels at the red dashed lines in Fig.~\ref{fig:2dImg}(a),
as shown in Fig.~\ref{fig:2dImg}(g), the DNN output fluctuates a
lot. Compared with the case of small initial parameters, as shown
in Fig.~\ref{fig:2dImg}(h), the convergence order of the first five
frequency peaks do not have a clear order. 
\begin{center}
\begin{figure}
\begin{centering}
\includegraphics[scale=0.45]{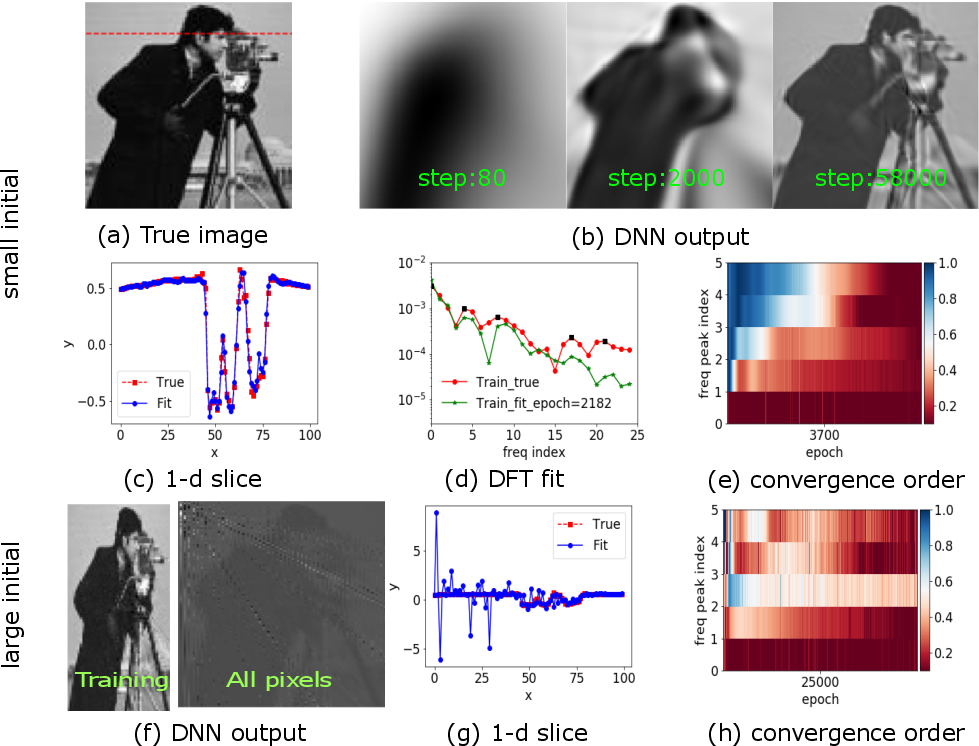} 
\par\end{centering}
\caption{F-Principle in fitting a natural image. The training data are all
pixels whose horizontal indices are odd. We initialize DNN parameters
by a Gaussian distribution with mean $0$ and standard deviation $0.08$
(small initial) or $1$ (large initial). (a) True image. (b-g) correspond
to the case of the small initial parameters. (f-h) correspond to the
case of the large initial parameters. (b) DNN outputs of all pixels
at different training epochs. (c, g) DNN outputs (blue) and the true
gray-scale (red) of test pixels at the red dashed position in (a).
(d) $|\hat{h}(k)|$ (green) at certain training epoch and $|\hat{f}(k)|$
(red) at the red dashed position in (a), as a function of frequency
index. Selected peaks are marked by black dots. (e, h) $\Delta_{F}(k)$
computed by the training data at different epochs for the selected
frequencies in (d). (f) DNN outputs of training pixels (left) and
all pixels (right) after training. We use a tanh-DNN with widths $2$-$400$-$200$-$100$-$1$.
We train the DNN with the full batch and learning rate $0.0002$.
The DNN is trained by Adam optimizer \citep{kingma2014adam} with
the MSE loss function. \label{fig:2dImg}}
\end{figure}
\par\end{center}

\section{Another viewpoint of examining F-Principle in MNIST/CIFAR10 through
filtering method \label{sec:Filter-1}}

The section we present another viewpoint of examining F-Principle
in MNIST/CIFAR10 through filtering method. For readers' convenience,
we describe the filtering method again in this section.

We decompose the frequency space into two domains by a constant $k_{0}$,
i.e., a low-frequency domain of $|\vec{k}|\leq k_{0}$ and a high-frequency
domain of $|\vec{k}|>k_{0}$, where $|\cdot|$ is the length of
a vector. Then, $\vec{y}_{i}$ can be decomposed by $\vec{y}_{i}=\vec{y}_{i}^{\mathrm{low},k_{0}}+\vec{y}_{i}^{\mathrm{high},k_{0}}$,
where $\vec{y}_{i}^{\mathrm{low},k_{0}}$and $\vec{y}_{i}^{\mathrm{high},k_{0}}$
are the low- and high- frequency part of $\vec{y}_{i}$, respectively.
For illustration, $y_{i}=\sin(2\pi x_{i})+\sin(2\pi(2x_{i}))+\sin(3\pi(3x_{i}))$,
if $k_{0}=1.5$, then, $y_{i}^{\mathrm{low},k_{0}}=\sin(2\pi x_{i})$
and $y_{i}^{\mathrm{high},k_{0}}=\sin(2\pi(2x_{i}))+\sin(3\pi(3x_{i}))$.

The DNN is trained as usual by the original dataset $\{(\vec{x}_{i},\vec{y}_{i})\}_{i=0}^{n-1}$.
During the training, we examine the distance between the DNN output
and the low-frequency part of $\{(\vec{x}_{i},\vec{y}_{i})\}_{i=0}^{n-1}$
by MSE, i.e., $\mathrm{Dist}(\vec{y}^{\mathrm{low},k_{0}},\vec{h})$. Under the F-Principle,
the DNN training would be dominated by the low-frequency part $\vec{y}^{\mathrm{low},k_{0}}$
at the early stage, therefore, $\mathrm{Dist}(\vec{y}^{\mathrm{low},k_{0}},\vec{h})$
would decrease. At the latter stage, the training would be dominated
by high-frequency part $\vec{y}_{i}^{\mathrm{high},k_{0}}$, which
would make the the DNN output deviate from the low-frequency part
$\vec{y}^{\mathrm{low},k_{0}}$, therefore, $\mathrm{Dist}(\vec{y}^{\mathrm{low},k_{0}},\vec{h})$
would increase. In short, F-Principle predicts that during the training
of original dataset $\{(\vec{x}_{i},\vec{y}_{i})\}_{i=0}^{n-1}$,
$\mathrm{Dist}(\vec{y}^{\mathrm{low},k_{0}},\vec{h})$ would first decrease and then
increase.

We refer to the \emph{turning epoch of training} when $\mathrm{Dist}(\vec{y}^{\mathrm{low},k_{0}},\vec{h})$
attains its minimum, denoted by $T_{k_{0}}$. If $k_{1}>k_{0}$, then,
$\vec{y}^{\mathrm{low},k_{1}}$ preserves not only all frequency components
in $\vec{y}^{\mathrm{low},k_{0}}$ but also those between $k_{0}$
and $k_{1}$. The DNN would spend more time to converge all frequency
components of $\vec{y}^{\mathrm{low},k_{1}}$ compared with $\vec{y}^{\mathrm{low},k_{0}}$.
Therefore, $T_{k_{1}}>T_{k_{0}}$. In short, F-Principle predicts
that during the training of original dataset $\{(\vec{x}_{i},\vec{y}_{i})\}_{i=0}^{n-1}$,
$T_{k_{0}}$ monotonically increases with $k_{0}$.

Note that$\mathrm{Dist}(\vec{y}^{\mathrm{low},k_{0}},\vec{h})$ is \emph{not} a generalization
error \emph{nor} a test error because $\mathrm{Dist}(\vec{y}^{\mathrm{low},k_{0}},\vec{h})$
quantifies how well DNN learns the low frequency part of the training
dataset. $\vec{y}^{\mathrm{low},k_{0}}$ can be obtained by the following
filtering method.

\subsection{Filtering method \label{subsec:Filtering-method-1}}

We obtain the low-frequency part $\vec{y}^{\mathrm{low},k_{0}}$ by
convolving the original dataset $\{(\vec{x}_{i},\vec{y}_{i})\}_{i=0}^{n-1}$
with a Gaussian filter. An intuition of why such operation can eliminate
the high-frequency part of the original dataset is as follows. The
convolution in the spatial domain is equivalent to the product in
the frequency domain. The Fourier transform of a Gaussian kernel is
still a Gaussian kernel. Therefore, the convolution result is the
product of a Gaussian kernel with the Fourier transform of the original
dataset. Since the tail of a Gaussian kernel exponentially decays,
high-frequency components after filtering almost vanish due to the
product with small numbers (close to zero).

We train the DNN with\emph{ original dataset $\{(\vec{x}_{i},\vec{y}_{i})\}_{i=0}^{n-1}$.
}The\emph{ Gaussian-filtered dataset $\{(\vec{x}_{i},\vec{y}_{i}^{\delta})\}_{i=0}^{n-1}$
used to examine the DNN at each training epoch} can be obtained by
\[
\vec{y}_{i}^{\delta}=\frac{1}{C_{i}}\sum_{j=0}^{n-1}\vec{y}_{j}\exp\left(-|\vec{x}_{i}-\vec{x}_{j}|^{2}/(2\delta)\right),
\]
where $C_{i}=\sum_{j=0}^{n-1}\exp\left(-|\vec{x}_{i}-\vec{x}_{j}|^{2}/(2\delta)\right)$.
For fixed $\delta$, clearly, $\{\vec{y}_{i}^{\delta}\}$ preserves
the low frequency part while losing the high frequency part of $\{\vec{y}_{i}\}$.
When $\delta\rightarrow0$, $\vec{y}_{i}^{\delta}\rightarrow\vec{y}_{i}$,
i.e., keeping all frequencies. When $\delta\rightarrow\infty$, $\vec{y}_{i}^{\delta}\rightarrow\frac{1}{n}\sum_{j=0}^{n-1}\vec{y}_{j}$,
i.e., keeping only the lowest (zero) frequency. As $\delta$ increases,
$\{\vec{y}_{i}^{\delta}\}$ preserves less low-frequency components.
The turning epoch, $T_{\delta}$, would then decrease with $\delta$.
Therefore, the F-Principle predicts that \emph{during the training
of original dataset $\{(\vec{x}_{i},\vec{y}_{i})\}_{i=0}^{n-1}$:}

\emph{First, for a fixed $\delta$, the distance between the DNN output
and a low-frequency part of $\{(\vec{x}_{i},\vec{y}_{i})\}_{i=0}^{n-1}$,
$D(\vec{y}^{\delta},\vec{h})$, would first decrease and then
increase.}

\emph{Second, $T_{\delta}$ monotonically decreases with $\delta$. }

Note that if an algorithm captures the target function from high to
low frequency, these two predictions fail. Ideal experiments are shown
in Appendix \ref{sec:Filterappend} to illustrate this point. 
\begin{center}
\begin{figure}[h]
\begin{centering}
\subfloat[Fitting in Fourier domain ]{\begin{centering}
\includegraphics[scale=0.3]{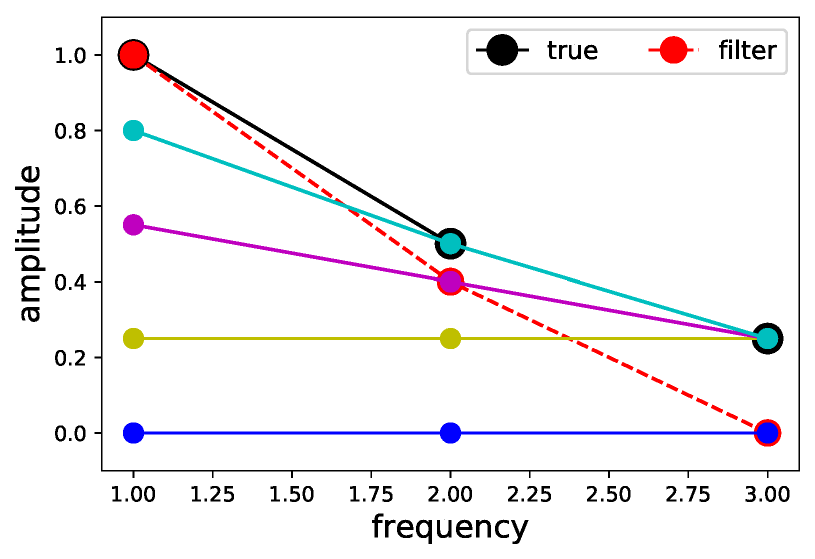} 
\par\end{centering}
}\subfloat[$D(\vec{y}^{\delta},h)$ vs. fitting epoch]{\begin{centering}
\includegraphics[scale=0.3]{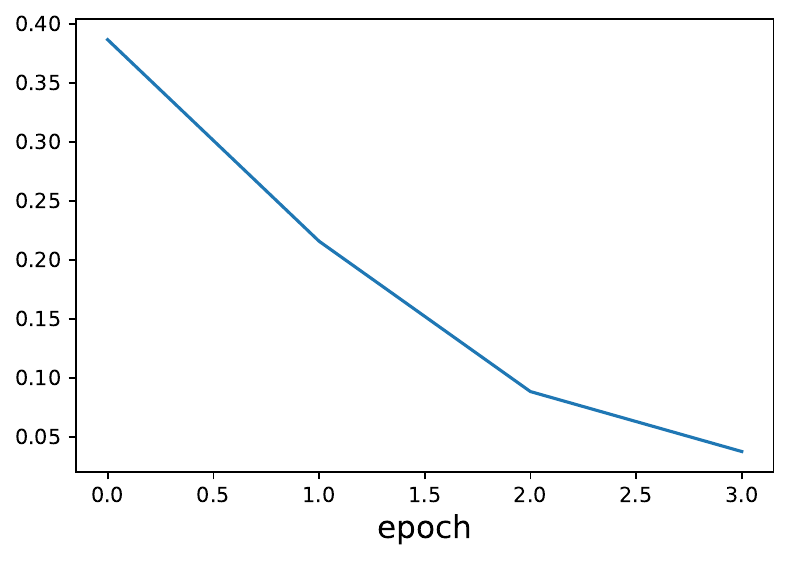} 
\par\end{centering}
}
\par\end{centering}
\caption{Ideal experiment. In (a), the blue, yellow, magenta and cyan curves
correspond to fitting curves at the initial, first, second and third
fitting epoch. \label{fig:ideal-2} }
\end{figure}
\par\end{center}

\subsection{DNNs with various settings}

With the filtering method, we show the F-Principle in the DNN training
process of real datasets. For MNIST, we use a fully-connected tanh-DNN
(no softmax) with MSE loss; for CIFAR10, we use cross-entropy loss
and a ReLU-CNN, followed by a fully-connected DNN with a softmax.

As an example, results of each dataset for one $\delta$ are shown
in Fig. \ref{fig:Noisefitting-Mnist-2}(a). In both cases, $D(\vec{y}^{\delta},\vec{h})$
first decreases and then increases, which meet the first prediction.

As shown in Fig. \ref{fig:Noisefitting-Mnist-2}(b), $T_{\delta}$
monotonically decreases with $\delta$, which meets the second prediction.
We also remark that, based on above results on cross-entropy loss,
the F-Principle is not limited to MSE loss, which possesses a natural
Fourier domain interpretation by the Parseval's theorem as illustrated
in \citep{xu_training_2018} and \citep{rahaman2018spectral}. Note
that the above results holds for optimization methods of both gradient
descent and stochastic gradient descent. 
\begin{center}
\begin{figure}[h]
\begin{centering}
\subfloat[$D(\vec{y}^{\delta},h)$ ]{\begin{centering}
\includegraphics[scale=0.32]{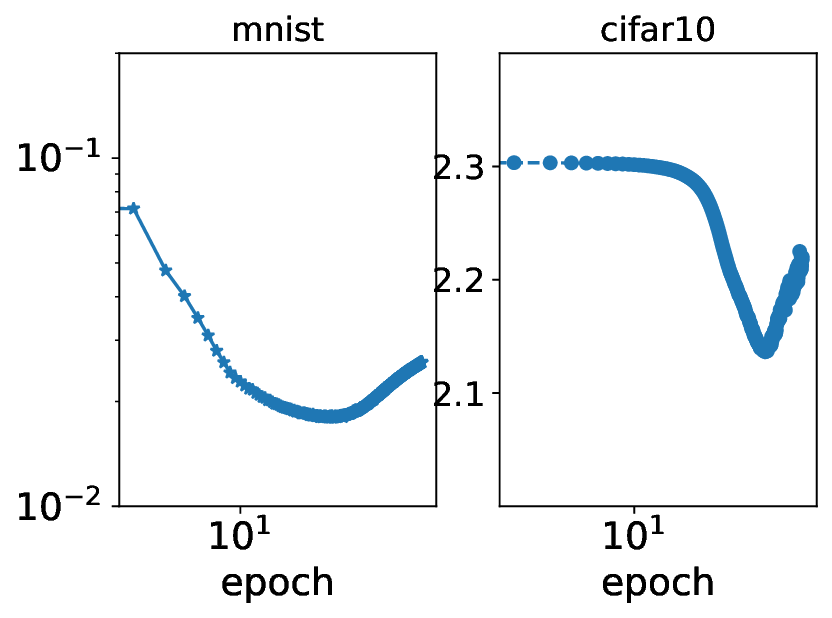} 
\par\end{centering}
}\subfloat[Normalized $T_{\delta}$]{\begin{centering}
\includegraphics[scale=0.32]{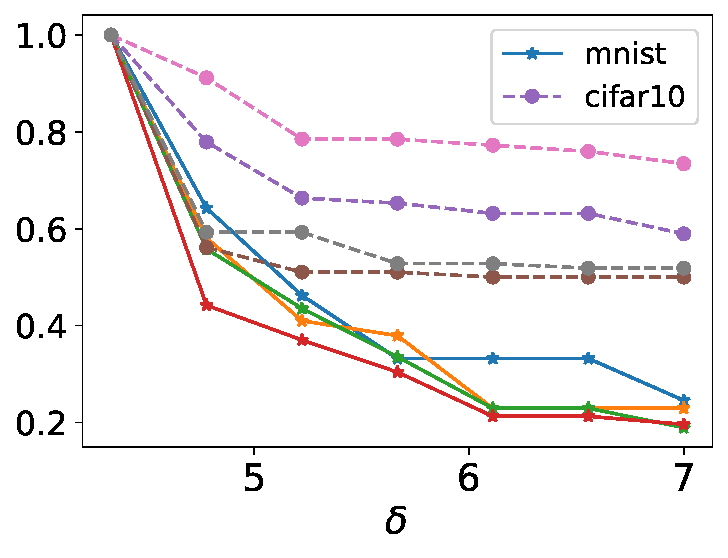} 
\par\end{centering}
}
\par\end{centering}
\caption{F-Principle in MNIST and CIFAR10. (a) $D(\vec{y}^{\delta},\vec{h})$
against training epoch. $\delta=7$ and $4.3$ for MNIST (left) and
CIFAR10 (right), respectively. (b) $T_{\delta}$, normalized by the
maximal $T_{\delta}$ of each trial, is plotted against filter width
$\delta$. Solid and dashed curves are for MNIST and CIFAR10, respectively.
Each curve is for one trial. \label{fig:Noisefitting-Mnist-2} }
\end{figure}
\par\end{center}

\subsection{F-Principle in VGG16 }

It is important to verify the F-Principle in a commonly used and large
DNNs. Therefore, we use the filtering method to show the F-Principle
in the VGG16 \citep{simonyan2014very} equipped with a 1024 fully-connected
layer. We train the network with CIFAR10 from \emph{scratch}. As shown
in Fig. \ref{fig:VGG-1} (a) and (b), the phenomena are consistent
with the first and second prediction in Section \ref{subsec:Filtering-method-1}. 
\begin{center}
\begin{figure}[h]
\begin{centering}
\subfloat[$D(\vec{y}^{\delta},h)$ ]{\begin{centering}
\includegraphics[scale=0.23]{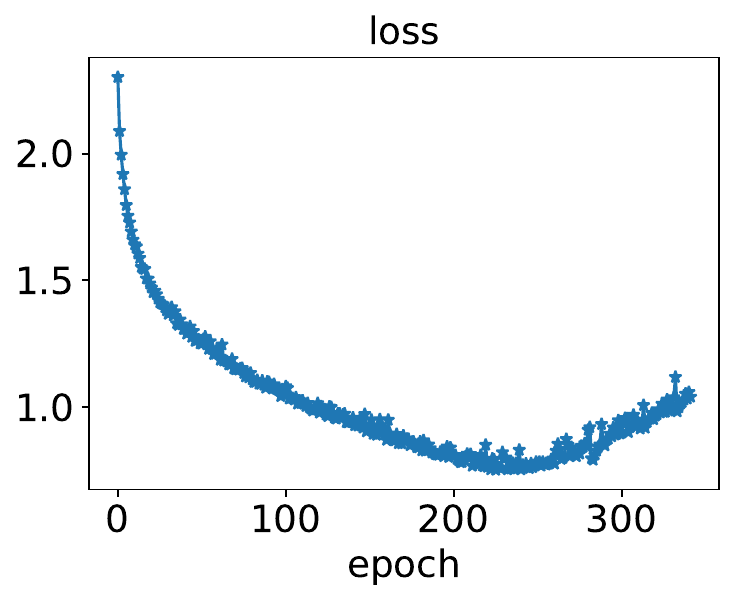} 
\par\end{centering}
}\subfloat[Normalized $T_{\delta}$]{\begin{centering}
\includegraphics[scale=0.23]{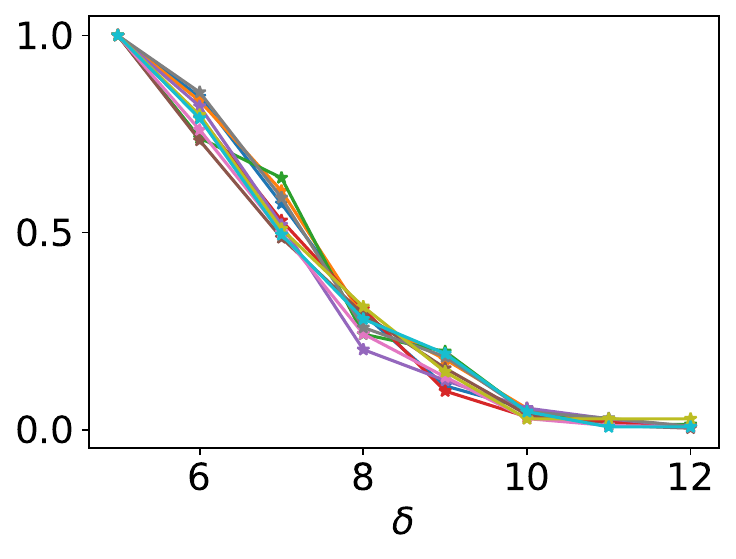} 
\par\end{centering}
}
\par\end{centering}
\caption{F-Principle in VGG16 with CIFAR10. (a) $D(\vec{y}^{\delta},\vec{h})$
against training epoch. $\delta=4$. (b) $T_{\delta}$, normalized
by the maximal $T_{\delta}$ of each trial, is plotted against filter
width $\delta$. Each curve is for one trial. \label{fig:VGG-1} }
\end{figure}
\par\end{center}

\section{Behavior of anti-F-Principle in Synthetic data through filtering
method \label{sec:Filterappend}}

\subsection{Experiments}

Consider a target function is 
\begin{equation*}
    f(x)=c_{0}+\sum_{k=1}c_{k}\sin(2k-1)x.
\end{equation*}
The fitting 
\begin{equation*}
    h(x,t)=c_{0}(1-\exp(-a_{0}t))+\sum_{k=1}(1-\exp(-a_{k}t))c_{k}\sin(2k-1)x.
\end{equation*}

In this section, we refer to F-Principle (anti-F-Principle) if $a_{k}$monotonically
decreases (increases) as $k$, i.e., low (high) frequency has higher
priority when $h(x,t)$ converges to $f(x)$ as $t\rightarrow\infty$.
Fig. \ref{fig:Noisefitting-Mnist-1} shows that F-Principle and anti-F-Principle
have different behavior in the filtering method. In any case, the
F-Principle meets the two predictions in the main text, that is, \emph{during
the training of original dataset $\{(\vec{x}_{i},\vec{y}_{i})\}_{i=0}^{n-1}$:}

\emph{First, for a fixed $\delta$, the distance between the DNN output
and a low-frequency part of $\{(\vec{x}_{i},\vec{y}_{i})\}_{i=0}^{n-1}$,
$D(\vec{y}^{\delta},h)$, would first decrease and then increase. }

\emph{Second, $T_{\delta}$ monotonically decreases with $\delta$. }

An intuitive understanding of the anti-F-Principle is in the next
sub-section. 
\begin{center}
\begin{figure*}
\begin{centering}
\subfloat[]{\begin{centering}
\includegraphics[scale=0.24]{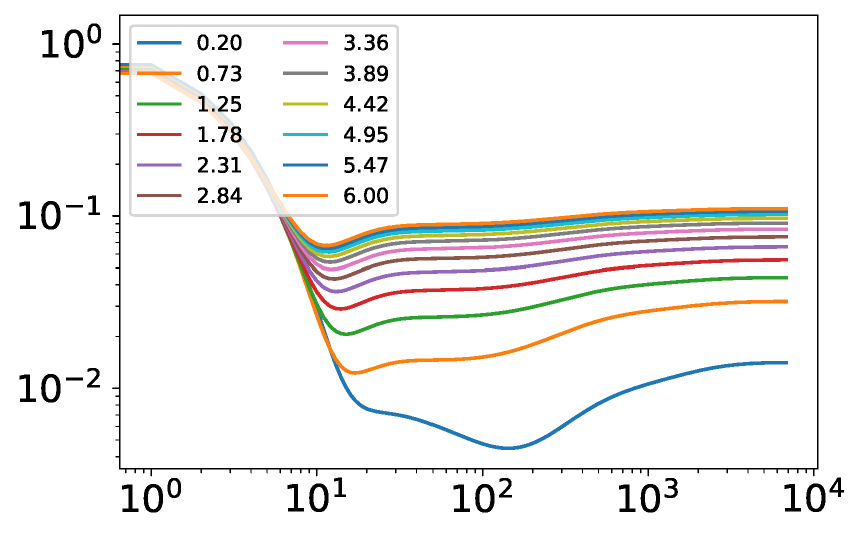} 
\par\end{centering}
}\subfloat[ ]{\begin{centering}
\includegraphics[scale=0.24]{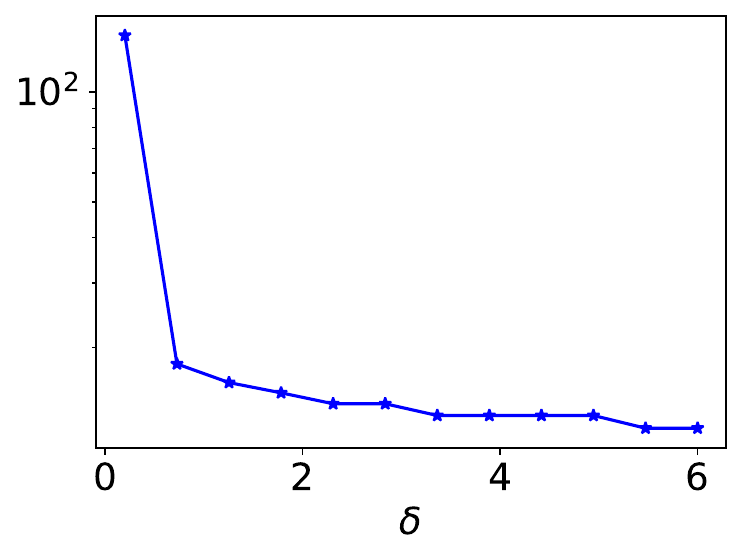} 
\par\end{centering}
}\subfloat[]{\begin{centering}
\includegraphics[scale=0.24]{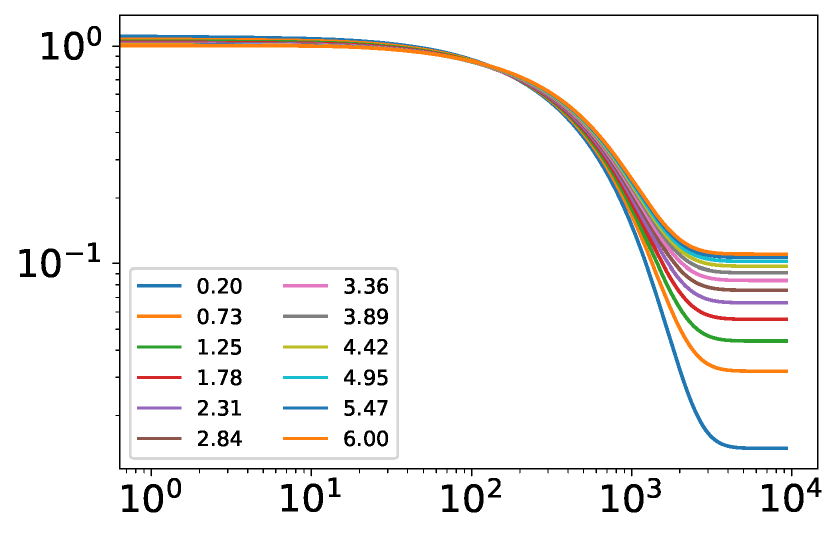} 
\par\end{centering}
}
\par\end{centering}
\begin{centering}
\subfloat[]{\begin{centering}
\includegraphics[scale=0.24]{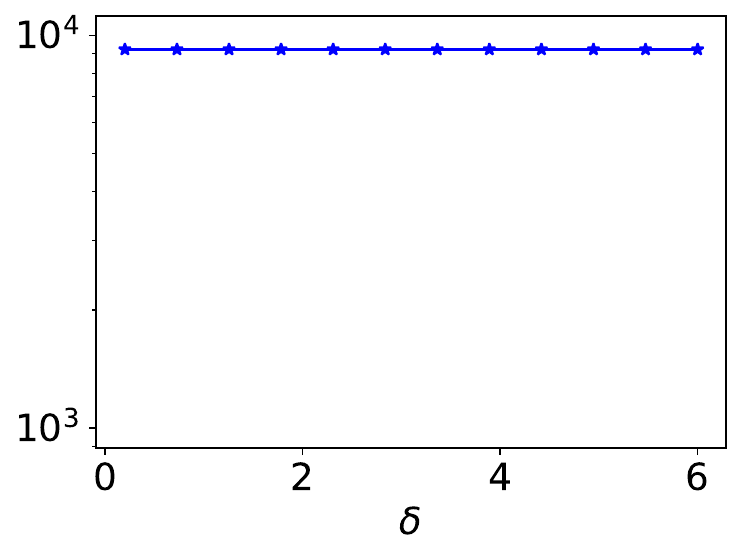} 
\par\end{centering}
}\subfloat[]{\begin{centering}
\includegraphics[scale=0.24]{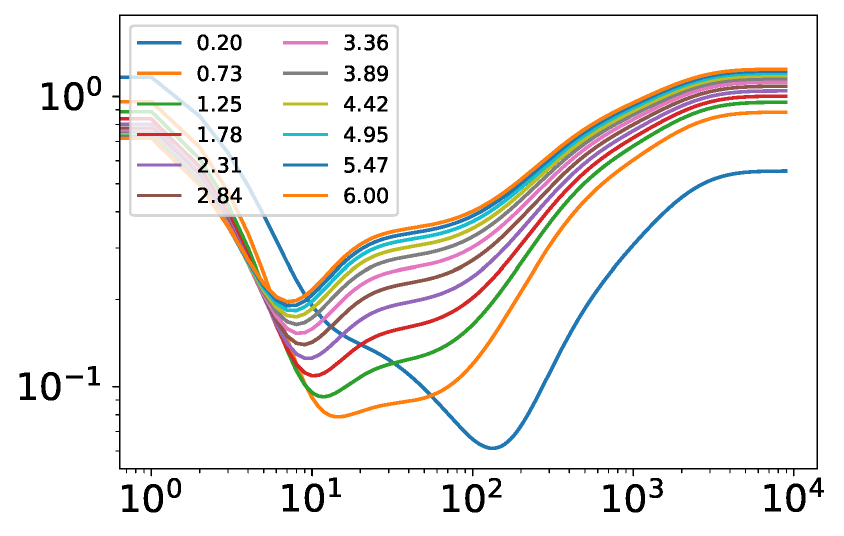} 
\par\end{centering}
}\subfloat[]{\begin{centering}
\includegraphics[scale=0.24]{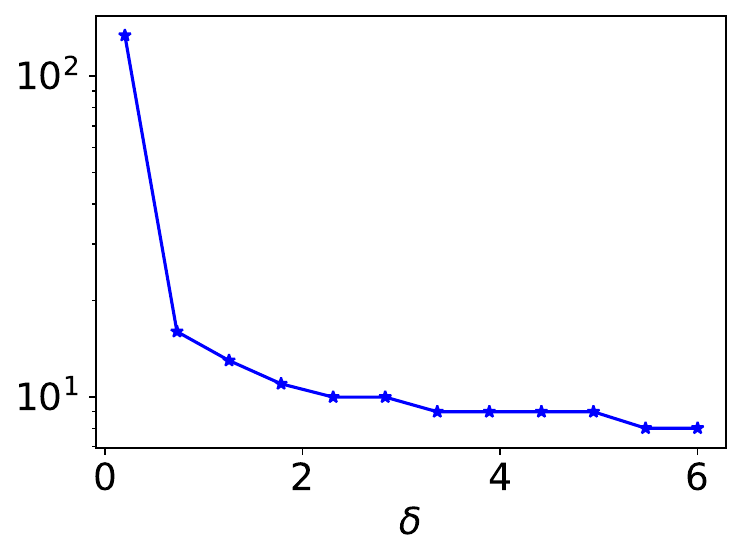} 
\par\end{centering}
}
\par\end{centering}
\begin{centering}
\subfloat[]{\begin{centering}
\includegraphics[scale=0.24]{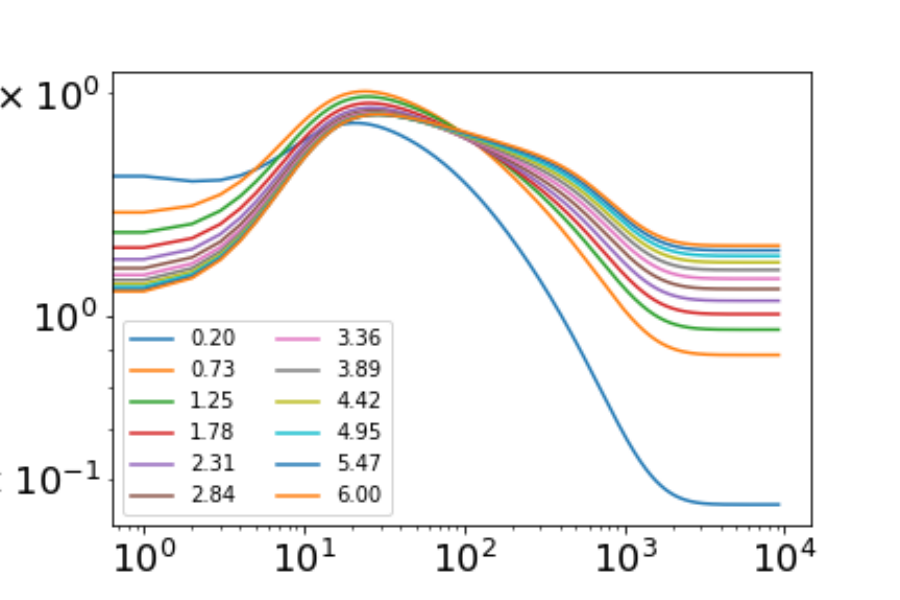} 
\par\end{centering}
}\subfloat[]{\begin{centering}
\includegraphics[scale=0.24]{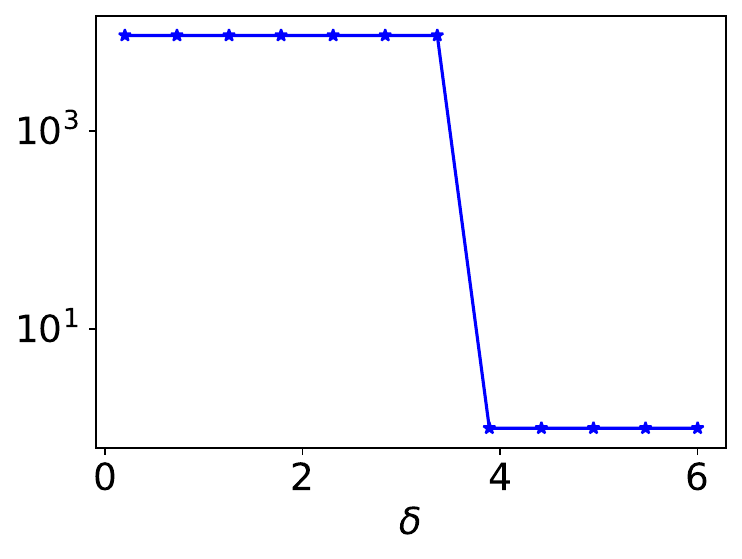} 
\par\end{centering}
}\subfloat[$D(\vec{y}^{\delta},h)$ vs. $t$ ]{\begin{centering}
\includegraphics[scale=0.24]{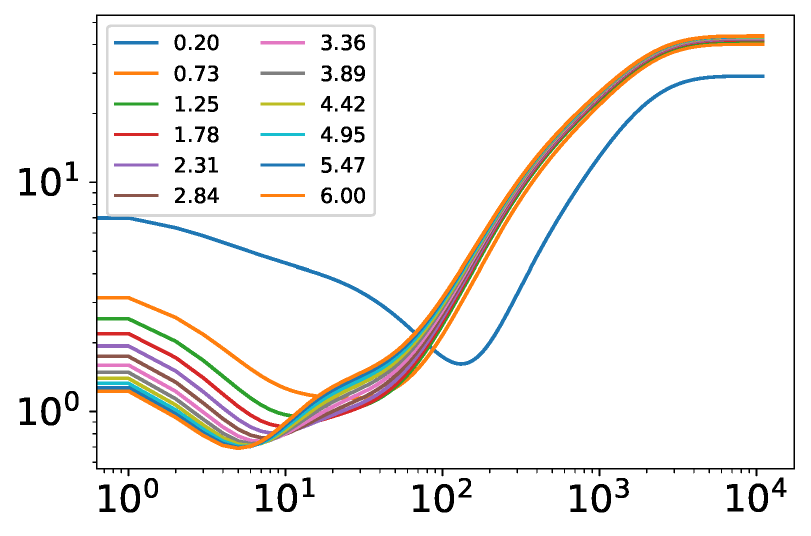} 
\par\end{centering}
}
\par\end{centering}
\begin{centering}
\subfloat[$T_{\delta}$]{\begin{centering}
\includegraphics[scale=0.24]{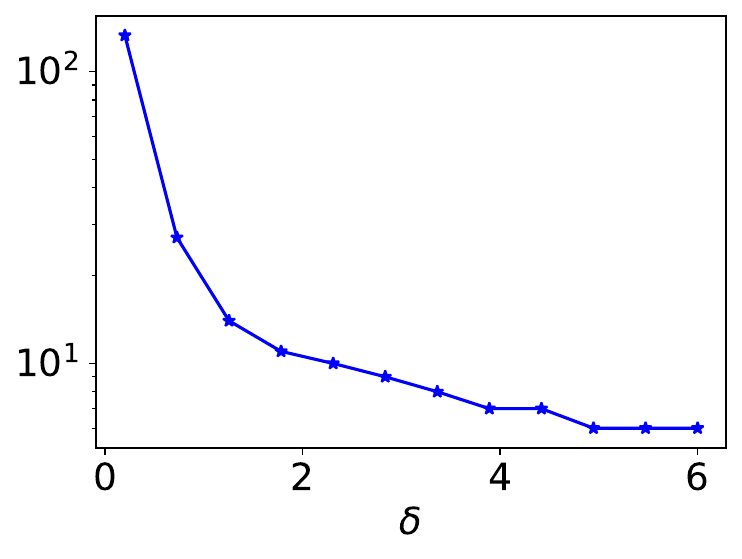} 
\par\end{centering}
}\subfloat[$D(\vec{y}^{\delta},h)$ vs. $t$ ]{\begin{centering}
\includegraphics[scale=0.24]{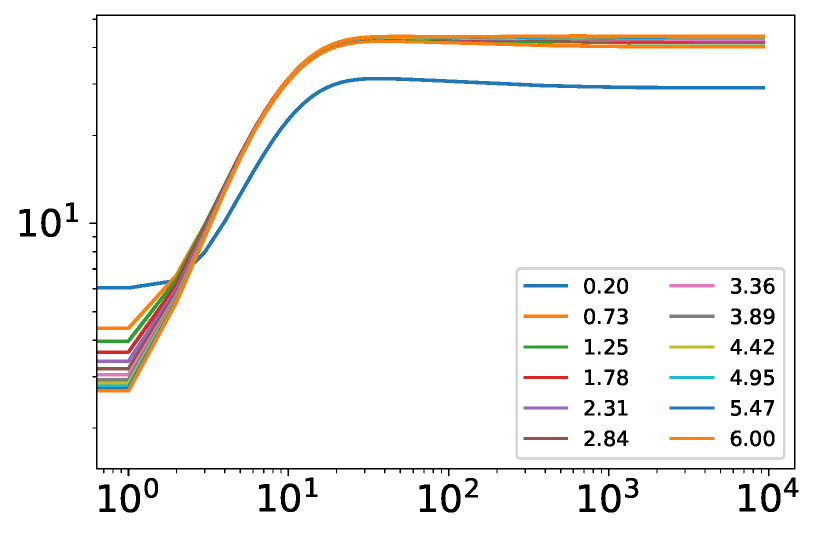} 
\par\end{centering}
}\subfloat[ $T_{\delta}$]{\begin{centering}
\includegraphics[scale=0.24]{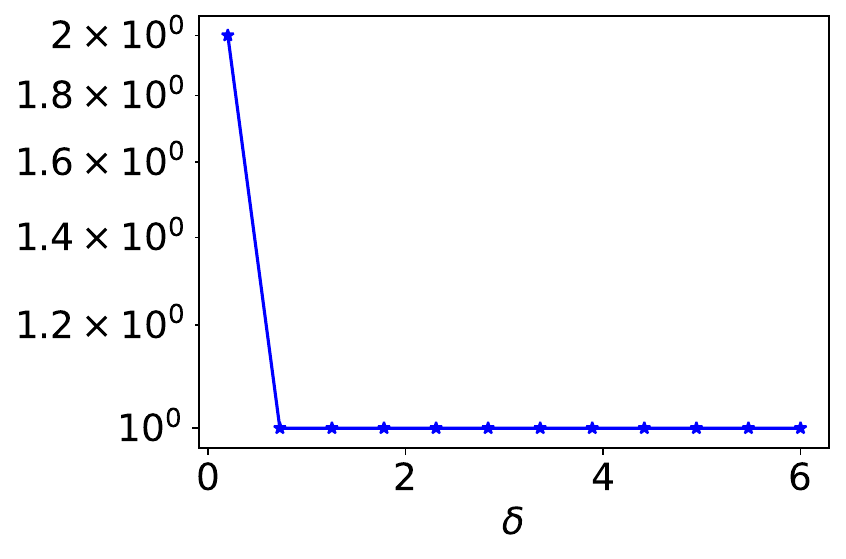} 
\par\end{centering}
}
\par\end{centering}
\caption{The priority coefficient $a_{k}\in[200,150,5,1]$ . Left two columns
(F-Principle): lower frequency has higher priority; right two columns
(anti-F-Principle): higher frequency has higher priority. The amplitude
for each frequency, $c_{k},$ from low to high is {[}1,1/2,1/5,1/8{]},
{[}1,1,1,1{]}, {[}1,2,5,8{]} for three rows, respectively. In the
first and the third column, the legend indicates $\delta$. $x$ is
evenly sampled from $[-6.28,6.28]$ with size $100$. The output and
the loss are computed every $dt=0.001$.\label{fig:Noisefitting-Mnist-1} }
\end{figure*}
\par\end{center}

\subsection{Understanding of the phenomenon of anti-F-Principle}

If an algorithm captures the target function from high to low frequency,
the two F-Principle predictions fail. Here show ideal experiments
in Fig. \ref{fig:ideal-1} to illustrate this claim.

Firstly, consider that the target function decays in Fourier domain.
In Fig. \ref{fig:ideal-1}a, the target function has three frequencies,
marked by black dots; the filtered data $\vec{y}^{\delta}$, marked
by red dots and dashed line, preserves all of the first frequency,
most of the second frequency, and non the third frequency. The initial
value is zero at all three frequencies (Note that the initial DNN
output is often close to zero). As high frequency converges faster,
the third frequency converges while other two frequencies do not,
denoted by the yellow curve. Despite the fitting curve deviates from
the filtered data at the third frequency, due to the large amplitude
of first two frequencies, the combined contribution from the first
two frequencies leads the fitting curve closer to the filtered data,
i.e., $D(\vec{y}^{\delta},h)$ decreases. As the fitting curve
evolves from the magenta one to the cyan one, the fitting curve deviates
from the filtered data at the second frequency, however, it gets much
closer to the filtered data at the first frequency, leading to decrement
of $D(\vec{y}^{\delta},h)$. Therefore, $D(\vec{y}^{\delta},h)$
decreases during the training (Fig. \ref{fig:ideal-1}b), i.e., contradicting
to the F-Principle's predictions fail.

Secondly, consider that the target function keeps constant in Fourier
domain. At a early stage, the fitting curve evolves from the blue
solid one to the blue dashed one, the summation of the first two frequencies,
which makes the fitting curve closer to the filtered one (red dashed),
is large than the change of the third frequency, then, $D(\vec{y}^{\delta},h)$
decreases. At the second stage, the fitting curve evolves from the
magenta solid one to the magenta dashed one, the summation of the
second and the third frequencies, which makes the fitting curve deviate
from the filtered one (red dashed), is large than the change of the
first frequency, then, $D(\vec{y}^{\delta},h)$ increases. At final
stage, only the first frequency, which is shared with the filtered
data, does not converge yet. Then, converging the first frequency
makes $D(\vec{y}^{\delta},h)$ decreases. Therefore, $D(\vec{y}^{\delta},h)$
first decreases, then increases, and finally decreases during the
training (Fig. \ref{fig:ideal-1}d), i.e., contradicting to the F-Principle's
predictions fail.

Thirdly, consider that the target function increases in Fourier domain.
Before the final stage, the third frequency dominates the evolution
(Fig. \ref{fig:ideal-1}e), thus, $D(\vec{y}^{\delta},h)$ increases.
At final stage, $D(\vec{y}^{\delta},h)$ increases slowly due to
the converging of the first frequency, which is small-amplitude. Therefore,
$D(\vec{y}^{\delta},h)$ almost always increases during the training
(Fig. \ref{fig:ideal-1}f), i.e., contradicting to the F-Principle's
predictions fail. 
\begin{center}
\begin{figure}[h]
\begin{centering}
\subfloat[]{\begin{centering}
\includegraphics[scale=0.3]{pic/syntheticfilter/understand/toy_fit_low} 
\par\end{centering}
}\subfloat[]{\begin{centering}
\includegraphics[scale=0.3]{pic/syntheticfilter/understand/toy_loss_low} 
\par\end{centering}
}
\par\end{centering}
\begin{centering}
\subfloat[]{\begin{centering}
\includegraphics[scale=0.3]{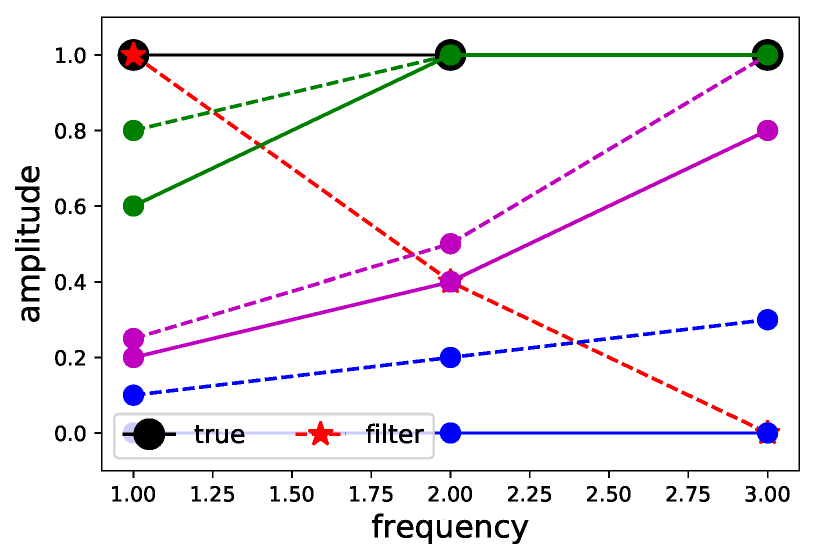} 
\par\end{centering}
}\subfloat[]{\begin{centering}
\includegraphics[scale=0.3]{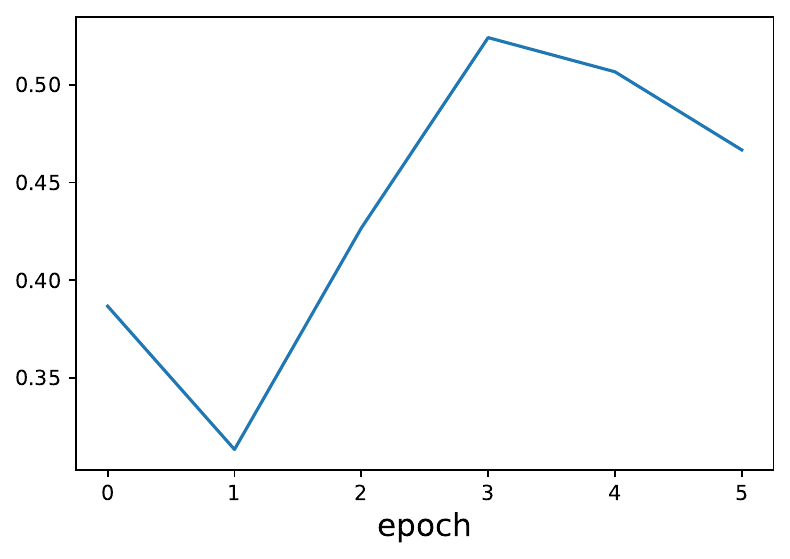} 
\par\end{centering}
}
\par\end{centering}
\begin{centering}
\subfloat[Fitting in Fourier domain ]{\begin{centering}
\includegraphics[scale=0.3]{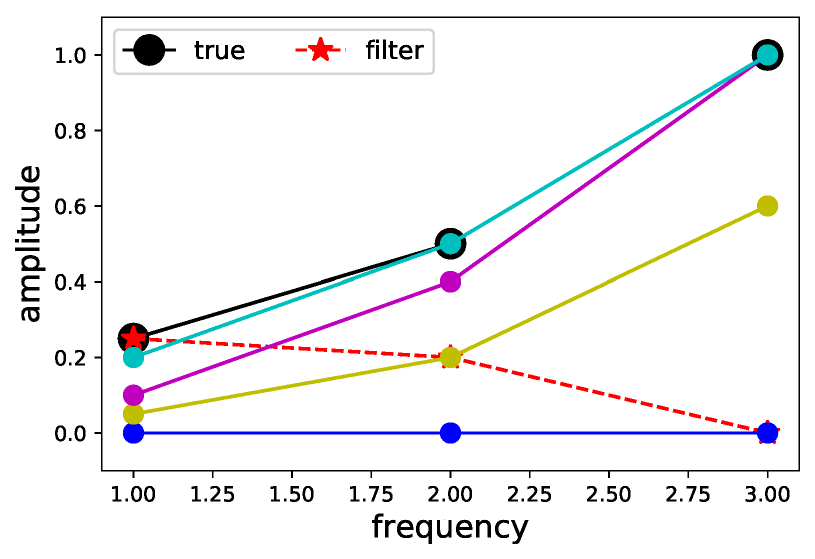} 
\par\end{centering}
}\subfloat[$D(\vec{y}^{\delta},h)$ vs. training epoch]{\begin{centering}
\includegraphics[scale=0.3]{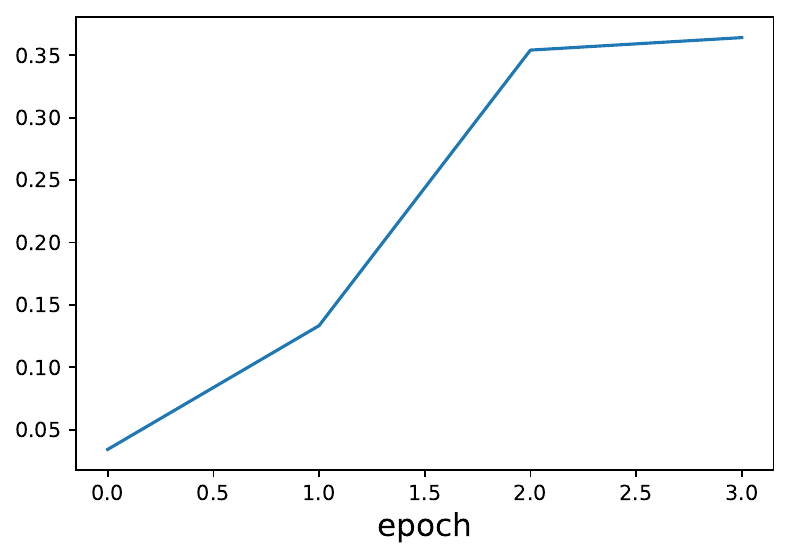} 
\par\end{centering}
}
\par\end{centering}
\caption{Ideal experiment. Each row is for one target function, indicated by
black dots. In the first column, the curves with legends are fitting
curve at certain training epoch. A lower one is from a earlier training
epoch.\label{fig:ideal-1} }
\end{figure}
\par\end{center}
\end{document}